\documentclass[11pt]{article}

\usepackage[preprint]{acl}

\usepackage{times}
\usepackage{latexsym}
\usepackage[T1]{fontenc}
\usepackage[utf8]{inputenc}
\usepackage{microtype}
\usepackage{inconsolata}
\usepackage{url}
\usepackage{subcaption}
\usepackage{booktabs}
\usepackage{graphicx}
\usepackage{multirow}
\usepackage{array}
\usepackage{makecell}
\usepackage[table]{xcolor}

\usepackage{cuted}
\usepackage{amsmath}
\usepackage{mathtools}
\usepackage{amsfonts}
\usepackage{amssymb}
\usepackage{amsthm}

\usepackage{algorithm}
\usepackage{algpseudocode}

\usepackage{tcolorbox}
\tcbuselibrary{skins,breakable}

\usepackage{placeins}
\usepackage{longtable}

\usepackage{hyperref}
\usepackage[capitalize,noabbrev]{cleveref}

\newtheorem{lemma}{Lemma}
\newtheorem{proposition}{Proposition}

\newcommand{\KL}{D_{\mathrm{KL}}}

\newcommand{\hist}{h}
\newcommand{\canon}{c}
\newcommand{\task}{q}

\newcommand{\yseq}{y}

\newcommand{\base}{\pi_{0}}
\newcommand{\student}{\pi_{\theta}}

\newcommand{\ccopd}{\textsc{CCOPD}}
\newcommand{\full}{\textsc{Full}}
\newcommand{\concat}{\textsc{Concat}}
\newcommand{\raw}{\textsc{Raw-Sharded}}
\newcommand{\waittraj}{\textsc{Wait}}
\newcommand{\useronly}{\textsc{User-Only}}

\title{Same Evidence, Different Answers: Canonical-Context On-Policy Distillation for Multi-Turn Language Models}

 \author{ \begin{tabular}{@{}c@{}} \large Zizhuo Lin$^{1,\dagger}$ \quad Quanling Liu$^{1}$ \quad Jinsheng Quan$^{1}$ \quad Chao Zhang$^{1}$ \quad Yifan Zhu$^{1}$ \\ \large Xing Shi$^{1}$ \quad Jingtao Xu$^{1}$ \quad Zhihui Li$^{2}$ \quad Yawei Luo$^{1}$ \\ \normalsize $^{1}$Zhejiang University \quad $^{2}$University of Science and Technology of China \\ \small $^{\dagger}$Email: \texttt{zizhuolin@zju.edu.cn} \end{tabular} }

\begin{document}
\maketitle

\begin{abstract}
Large language models (LLMs) often solve a task when all instructions are given in a single prompt, but fail when the same information is revealed gradually across turns. When a clean FULL prompt and a RAW-SHARDED conversation contain the same complete user evidence, the model should still arrive at the same answer. We argue that a key reason for this gap is self-anchored drift: responses produced under partial information introduce unsupported assumptions, and those assumptions later distort the final answer. To reduce this effect, we propose Canonical-Context On-Policy Distillation (CCOPD). During training, the same base model is used in two roles: a frozen teacher conditioned on the clean FULL prompt and a trainable student that receives the same evidence incrementally through a multi-turn conversation; CCOPD aligns the student’s behavior on its own trajectories with the teacher’s canonical full-context behavior. Trained only on math problem conversations, CCOPD yields a 32\% average relative improvement in RAW-SHARDED performance over the original base model across math and five zero-shot out-of-domain task families, while largely preserving full-context performance. Further analyses suggest that CCOPD strengthens grounding in user evidence and reduces sensitivity to contamination from earlier assistant turns.
\end{abstract}

\section{Introduction}
% Users often reveal task constraints over several turns rather than in one fully formed prompt. When complete task-relevant user evidence is revealed incrementally over the course of a multi-turn interaction, a basic reliability requirement is that, once the conversation is complete, the model should behave as if the same evidence had been presented in a clean one-shot prompt. We call this requirement \emph{canonical-context consistency}.
% Recent multi-turn evaluations show that canonical-context consistency is fragile: models that succeed when a task is presented as a clean \full{} prompt  can still fail when the same information is disclosed incrementally through a \raw{} conversation \citep{laban2025lost,he2024multiif,li2025structflowbench}. Part 1 of Figure~\ref{fig:teaser} provides a concrete example of the three task-equivalent presentation modes.  

% \begin{f}
%   \centering
%   \includegraphics[width=\linewidth]{figures/teaser.pdf}
% \captionof{figure}{Overview of \ccopd{}. 
% \textbf{Part 1} contrasts task-equivalent \full{}, \textsc{Concat}, and \raw{} presentations. 
% \textbf{Part 2} distills a \raw{}-conditioned student from a frozen \full{}-conditioned teacher via token-level reverse KL on final-answer prefixes. 
% \textbf{Part 3} illustrates reduced self-anchored drift and improved canonical-context consistency.}
%   \label{fig:teaser}
% \end{strip}

\begin{figure}[t]
\centering
\includegraphics[width=\linewidth]{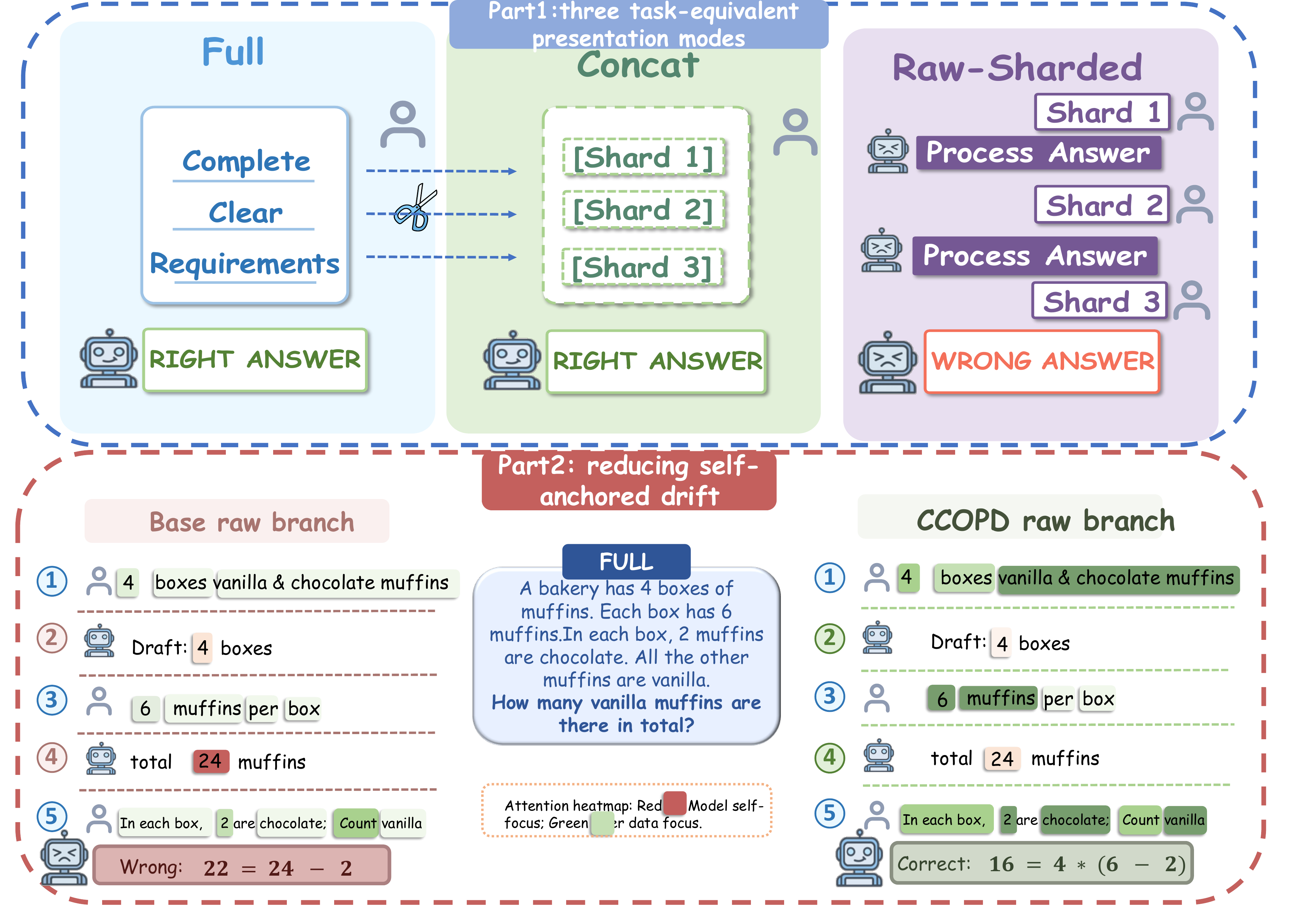}
\caption{%
\textbf{Part 1}: Task-equivalent \full{}, \textsc{Concat}, and \raw{} presentations. 
\textbf{Part 2}: Reduced self-anchored drift and improved canonical-context consistency.}
\label{fig:teaser}
\end{figure}

Users rarely specify a task in one fully formed prompt: they often reveal constraints over several turns. %When complete task-relevant user evidence is revealed incrementally over the course of a multi-turn interaction, a basic reliability requirement is that, once the conversation is complete, the model should behave as if the same evidence had been presented in a clean one-shot prompt. 
In multi-turn interactions where task-related user information is provided gradually, a fundamental reliability criterion requires the model to yield the same outcomes as it would when all such information is given in a single prompt after the conversation ends. We call this requirement \emph{canonical-context consistency}. Recent multi-turn evaluations show that this consistency is fragile: models that succeed when a task is presented as a clean \full{} prompt can still fail when the same information is disclosed incrementally through a \raw{} conversation \citep{laban2025lost,he2024multiif,li2025structflowbench}. Part~1 of Figure~\ref{fig:teaser} provides a concrete example of the three task-equivalent presentation modes.

% The core difficulty is that a \raw{} history is not merely a longer prompt. It also contains the model's own earlier replies, produced before the task is fully specified. Those replies can introduce tentative answers, unsupported assumptions, or partial reasoning into the context. By the final turn, the model may rely on this self-generated text instead of fully re-grounding its answer in the user-provided shards. We call this failure mode \emph{self-anchored drift}.

The core difficulty is that a \raw{} history is not merely a longer
prompt. It also contains the model's own earlier replies, produced before
the task evidence is complete. Those replies can introduce tentative
answers, unsupported assumptions, or partial reasoning into the context.
By the final turn, the model may rely on this self-generated text instead
of fully re-grounding its answer in the completed user-provided shards.
We call this failure mode \emph{self-anchored drift}.

A trivial solution is to repair the trajectory at inference time. Such methods can reflect on, revise, reset, or consolidate intermediate reasoning before the model commits to a final answer \citep{shinn2023reflexion,madaan2023selfrefine,mohammad-khalid-etal-2025-ergo}. Although they are often useful in deployment, they rely on an additional external control loop and thus modify the inference process itself. Another line of work addresses the problem through clarification and abstention: when a request is ambiguous or the information needed for answering is still missing, the model must decide whether to ask a clarifying question, defer its response, or abstain \citep{zhang2025modeling,zhang2025clarify,testoni2024asking,wu2025collabllm,zhang2024rtuning,cheng2024can}. However, recent evidence suggests that current off-the-shelf LLMs still struggle to reliably identify ambiguous requests and ask appropriate clarifying questions in practice, while still introducing contamination from the model's own prior responses \citep{zhang2024clamber,li2025questbench,luo2025clarifymt}.

We argue that \textbf{canonical-context consistency should be internalized as a core model capability}: once the final turn provides sufficient user evidence, the model's  answer must be anchored to that evidence rather than deviating toward assumptions introduced by itself. To this end, we propose \textbf{Canonical-Context On-Policy Distillation} (\ccopd{}), a self-distillation objective that enforces canonical-context consistency under contaminated multi-turn histories. The same base model is adopted in two roles: a frozen teacher reads the clean canonical \full{} task, while the trainable student receives the task only through the gradually revealed \raw{} interaction. Once all task-relevant evidence has been revealed through the interaction, the \full{}-conditioned teacher provides token-level supervision on the student’s own final-answer prefixes. This same-prefix, on-policy supervision re-anchors the \raw{}-conditioned student, encouraging its answer distribution to move away from self-generated contamination and toward the canonical distribution grounded in the completed user evidence.  In this sense, \ccopd{} does not rely on a stronger external teacher.
It targets cases where the same base model can solve the task under the task-equivalent clean \full{} prompt, but may fail under the  \raw{} history. 
\ccopd{} trains the model to preserve that \full{}-context capability in the self-contaminated \raw{} setting.

Under the task-equivalent sharding setup introduced by \citet{laban2025lost}, \ccopd{} is trained only on sharded GSM8K-style math conversations.
Strikingly, this math-only post-training signal transfers well beyond math.
It improves \raw{} accuracy on math and yields zero-shot gains across five non-math settings. Overall, \ccopd{} yields a 32\% average relative improvement over the unmodified base model on \raw{} performance, while largely preserving \full{} and \concat{} performance.

 Our contributions are:

\begin{itemize}
\item We formalize canonical-context consistency for task-equivalent
histories and identify self-anchored drift as a concrete source of its
failure. We introduce \ccopd{}, an on-policy same-prefix distillation
objective that aligns \raw{} final-answer behavior with a frozen
same-backbone \full{} teacher, without adding a stronger external teacher
or an inference-time repair loop.

\item We show that a math-only \ccopd{} signal transfers beyond math:
the same adapter improves \raw{} performance across all six task
families, including five non-math zero-shot settings, while largely
preserving \full{} and \concat{} performance. These gains exceed the
data-matched post-training baselines in our evaluation.

\item We support the proposed mechanism with a suite of ablations and diagnostics. These analyses show that the gains are tied to canonical \full{}-view alignment and reverse-KL same-prefix supervision; when the model's own \full{} behavior is already strong, increasing teacher scale brings limited additional benefit. Matched pollution tests and evidence-focus probes further show stronger grounding in completed user evidence and reduced sensitivity to assistant-side anchors.

 \end{itemize}

\section{Related Work}
\paragraph{Inference-time repair and control loops.}
Inference-time repair and self-correction methods address model errors by adding an explicit test-time control process around generation
\citep{pan2024automatically,kamoi2024selfcorrection}.
Representative approaches revise generations through verbal reflection or self-feedback
\citep{shinn2023reflexion,madaan2023selfrefine,ferraz2024decrim},
check candidate answers with self-verification, external critics, or learned verifiers
\citep{weng2023selfverification,gou2024critic,zhang2024strongverifiers,gao2023rarr},
or reset the context before regenerating
\citep{mohammad-khalid-etal-2025-ergo}.
These methods can be effective when extra inference passes and reliable
feedback are available, but they optimize an augmented test-time system.
Our setting is complementary: the model must answer from the completed
transcript itself, where earlier partial-information replies may remain
as misleading context. \ccopd{} targets this robustness through training,
rather than by adding a repair or reset loop at test time.

\paragraph{Clarification, abstention, and online interaction control.} Another line of work studies how models should behave when a request is ambiguous or the information needed to answer is incomplete. Selective-prediction and ambiguity-aware generation methods study when models should withhold unreliable answers \citep{cole2023selective,kim2024ambiguity}. Clarification work evaluates when to ask a question and what question to ask \citep{zhang2024clamber,zhang2025clarify}. This literature asks whether to answer, ask, defer, or abstain under uncertainty. Our setting assumes that the missing evidence has already arrived, but the realized transcript may still contain assistant-side commitments made before it was complete. CCOPD targets how the model conditions on a completed but self-contaminated transcript.

\paragraph{On-policy distillation (OPD).} On-policy distillation reduces the trajectory mismatch in autoregressive distillation by supervising the student on states reached by its own policy, rather than only on fixed teacher demonstrations \citep{gu2024minillm,agarwal2024gkd}. Many recent OPD variants strengthen the supervision signal by giving the teacher access to additional task information \citep{zhao2026opsd,penaloza2026privileged,liu2026crosslingual,ye2026opcd,zhang2026opsdl}, or by coupling distillation with auxiliary inference, feedback, or optimization mechanisms \citep{zhao2026training,yang2026learning,zhu2026many}. CCOPD uses OPD for a different purpose. The teacher and student are given the same underlying task evidence and share the same backbone, but they condition on different presentations of that evidence: a clean FULL  for the teacher and a realized RAW-SHARDED transcript for the student. Thus, the teacher is presentation-privileged rather than information-privileged. The goal is not to import extra knowledge from a stronger or better-informed supervisor, but to distill invariance across task-equivalent presentations.

\section{Problem Statement}
\label{sec:problem}

\subsection{Problem Formulation}
\paragraph{Task-equivalent sharded presentations.}
For a task instance $\task$, let $\canon(\task)$ denote the canonical
\full{} prompt. Let $\hist(\task)$ denote a sharded presentation of the
same task, where task information is released across user turns and
interleaved with assistant replies before the final-answer request. We
call $\hist(\task)$ \emph{task-equivalent} to $\canon(\task)$ when the
sharded presentation reveals the same task-relevant user evidence as the
canonical prompt. 
% The model therefore has the evidence needed to solve
% the task, but receives it through a multi-turn conversational path rather
% than a single canonical prompt.

\paragraph{Canonical Context Consistency.}
For task-equivalent presentations, a reliable model should preserve its
final-answer distribution under the change of presentation:
\begin{equation}
    \pi(\yseq \mid \hist(\task))
    \approx
    \pi(\yseq \mid \canon(\task)).
    \label{eq:consistency}
\end{equation}
We call this requirement \emph{canonical-context consistency}. The
canonical prompt is not assumed to be an oracle; it is a controlled
presentation in which the same user evidence is available.

To inspect where this invariance breaks down, we compare next-token predictions under the two task-equivalent
presentations while holding the answer prefix fixed. For a shared answer
prefix $s$, define
\begin{equation}
\begin{aligned}
    \Psi_{\pi}(\task,s)
    =
    \KL\!\left(
    \pi(\cdot \mid \hist(\task),s)
    \;\middle\|\;
    \pi(\cdot \mid \canon(\task),s)
    \right).
\end{aligned}
\label{eq:local-presentation-gap}
\end{equation}
The same model appears on both sides; only the preceding presentation
changes. Hence $\Psi_{\pi}$ is a local diagnostic of presentation
sensitivity. Large values mark prefixes where the sharded path induces
continuation preferences that differ from those induced by the clean
presentation.

\paragraph{Self-anchored drift.}
The discrepancy in Eq.~\eqref{eq:local-presentation-gap} locates a shift
but does not identify which part of the sharded presentation is
responsible. In \raw{} histories, the distinctive source of concern is
earlier assistant text generated before the task evidence is complete.
We write a final-turn \raw{} history as
\begin{equation}
    \hist_{\raw}(\task)
    =
    (u_1,a_1,\ldots,u_{K-1},a_{K-1},u_K),
    \label{eq:raw-history}
\end{equation}
where $u_i$ are user shards and $a_i$ are non-final assistant replies.
Each $a_i$ is generated before all task evidence is available, and may
therefore contain unsupported guesses, provisional answers, or
task-specific commitments. After $u_K$ supplies the missing evidence, the
user evidence in the history is complete and task-equivalent to
$\canon(\task)$. However, the earlier $a_i$ remain in the conditioning
context. When these self-generated commitments pull the final-answer
behavior away from the completed user evidence, canonical-context
consistency fails. We call this failure mode \emph{self-anchored drift}.

\subsection{Probing self-anchored drift}
\label{sec:diagnostic-probes}

We probe the unmodified base model to test whether non-final assistant
replies still affect the final predictive state after all user evidence
is present. We refer to these non-final replies as process replies.

\paragraph{SAAR and span edit.}
For completed \raw{} histories, we annotate completed user-evidence spans
$G_{\mathrm{usr}}$ and assistant commitment spans $G_{\mathrm{self}}$.
The \emph{Self-Anchor Attention Ratio} (SAAR) summarizes final-answer
attention over these spans:
\begin{equation}
    \mathrm{SAAR}_{\ell}
    =
    \log
    \frac{\bar{A}_{\ell}(G_{\mathrm{usr}})+\epsilon}
         {\bar{A}_{\ell}(G_{\mathrm{self}})+\epsilon}.
    \label{eq:saar-main}
\end{equation}
Here $\bar{A}_{\ell}(G)$ averages layer-$\ell$ attention from answer
tokens to group $G$; higher SAAR indicates denser attention to completed
user evidence relative to earlier assistant commitments. Using the same
spans, we also mask commitment spans and measure the change in the
gold-vs-process-anchor margin. \Cref{fig:base-self-anchor-problem}a
shows that this edit selectively helps states anchored to the model's
process-stage wrong answer.

\paragraph{Neutral-placeholder contrast.}
We replace each process reply with a neutral placeholder while keeping the
user shards and turn structure fixed. For prefix $i$, let
$C_i^{\mathrm{raw}}$, $C_i^{\mathrm{neu}}$, and $C_i^{\mathrm{full}}$
be the realized, neutralized, and canonical \full{} contexts. With
$p_{i,r}^M=p_M(\cdot\mid C_i^r)$ for
$r\in\{\mathrm{raw},\mathrm{neu}\}$ and
$q_i^{\mathrm{full}}=\base(\cdot\mid C_i^{\mathrm{full}})$, define
\begin{equation}
\Delta_i^M
=
\KL(p_{i,\mathrm{raw}}^M \,\|\, q_i^{\mathrm{full}})
-
\KL(p_{i,\mathrm{neu}}^M \,\|\, q_i^{\mathrm{full}}).
\label{eq:neutral-placeholder-contrast}
\end{equation}
Positive $\Delta_i^M$ means that process replies increase canonical
deviation; \Cref{fig:base-self-anchor-problem}b shows that neutralization
reduces it.

\begin{figure}[t]
\centering
\begin{subfigure}[t]{0.48\linewidth}
    \includegraphics[width=\linewidth]{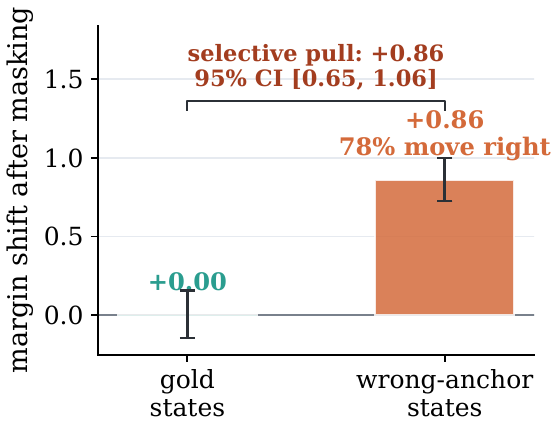}
   \caption{Span edit}
    \label{fig:base-self-anchor-left}
\end{subfigure}
\hfill
\begin{subfigure}[t]{0.48\linewidth}
    \includegraphics[width=\linewidth]{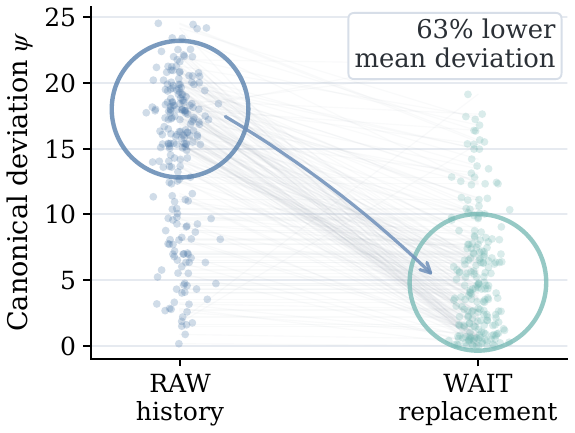}
    \caption{Neutral-placeholder replacement}
    \label{fig:base-self-anchor-right}
\end{subfigure}
\caption{
Qwen3-8b-base model probes for self-anchored drift.
(a) Masking process-reply commitment spans selectively improves
gold-vs-anchor margins for wrong-anchor states.
(b) Replacing process replies with neutral placeholders lowers
predictive-state deviation from the canonical \full{} reference.
}
\label{fig:base-self-anchor-problem}
\end{figure}

\section{Canonical-Context On-Policy Distillation}
\label{sec:method}
\begin{figure*}[t]
    \centering
    \includegraphics[width=\textwidth]{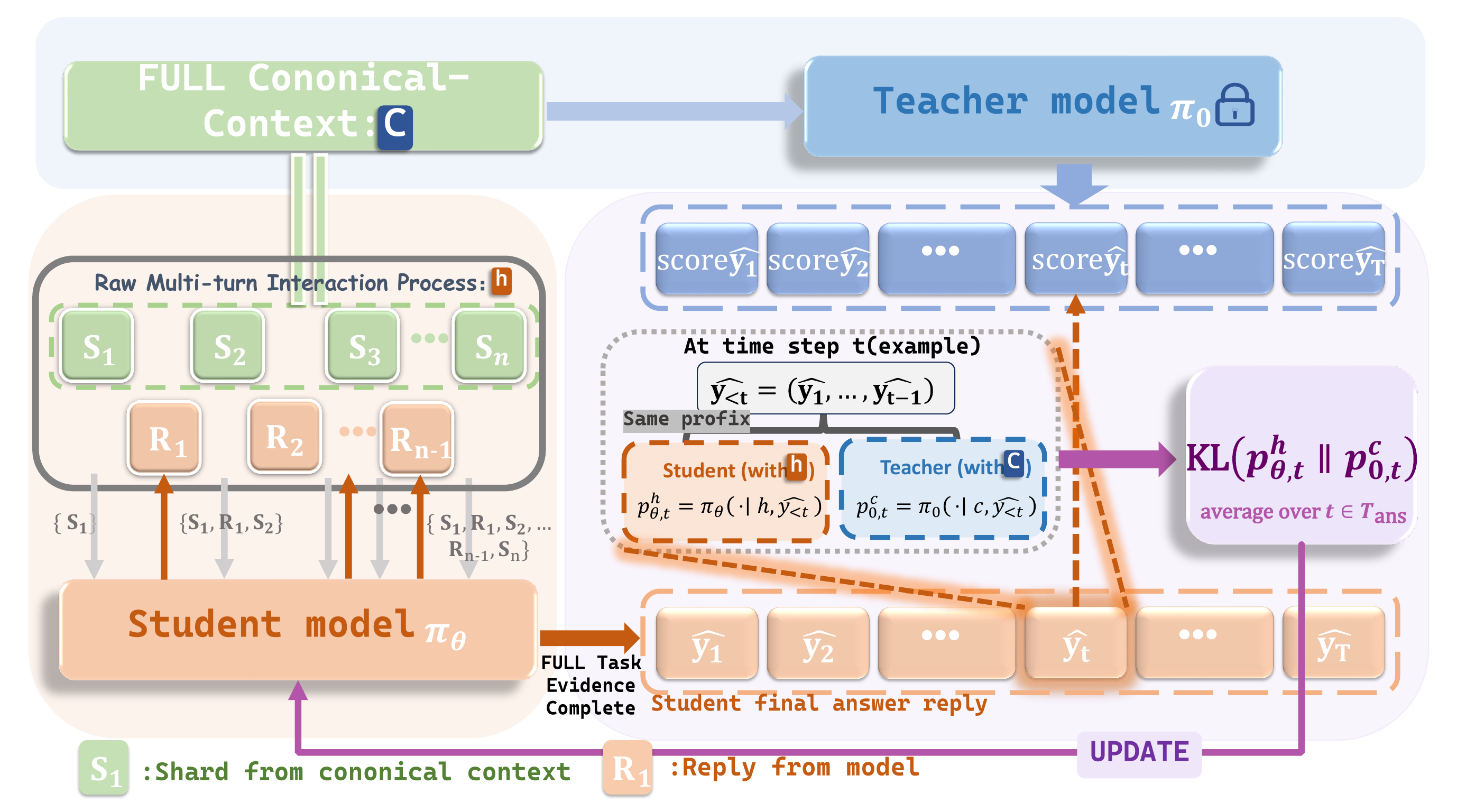}
    \caption{Overview method of \ccopd{}. CCOPD aligns a \raw{}-conditioned student with a frozen \full{}-conditioned teacher using answer-masked same-prefix reverse KL.}
    \label{fig:method}
\end{figure*}

\Cref{sec:problem} defines a local presentation gap at a shared answer
prefix. \ccopd{} turns this gap into an on-policy distillation signal:
a trainable student answers from the realized \raw{} history, while a
frozen copy of the same base model scores the student's own answer
prefixes under the canonical \full{} prompt. Training minimizes a
final-answer-masked reverse KL between these two next-token distributions.

\subsection{Retained Pairs and Model Roles}
\label{sec:method:pairs}

For each task $\task$, we form a final-turn pair $(c,h)$, where
$c=\canon(\task)$ and $h$ is the retained sharded history just before the
final answer. In the main setting, $h=\hist_{\raw}(\task)$: all
task-relevant user shards have been revealed, and the non-final assistant
replies from the realized interaction remain in the context. We retain
only histories that end with the final user turn after the user evidence
is complete.

The two contexts are used on separate paths. The student is trainable and
conditions only on $h$; the canonical prompt $c$ is never appended to the
student input. The teacher is a frozen copy of the same base model and
conditions only on $c$. Thus, \ccopd{} compares two presentations of the
same user evidence while updating only the history-conditioned student.
Appendix~\ref{app:pairing} details the structural checks and leakage
controls.

\subsection{On-Policy Canonical Relabeling}
\label{sec:method:relabeling}

Given a retained pair $(c,h)$, the current student produces a
final-answer rollout
\begin{equation}
    \hat{y}_{1:T} \sim \student(\cdot \mid h),
    \label{eq:ccopd-rollout}
\end{equation}
and $T_{\mathrm{ans}}(\hat{y})$ denotes the token positions belonging to
that final assistant continuation.  These prefixes are the on-policy
states for distillation: they are reached after the student conditions
on the realized transcript, including any earlier assistant-side
commitments contained in $h$.

For each $t\in T_{\mathrm{ans}}(\hat{y})$, the student and teacher score
the same prefix $\hat{y}_{<t}$ under different contexts:
\begin{equation}
\label{eq:ccopd-dist}
p^{h}_{\theta,t}
= \student(\cdot \mid h, \hat{y}_{<t}),
\quad
p^{c}_{0,t}
= \base(\cdot \mid c, \hat{y}_{<t}).
\end{equation}
The teacher is not asked to decode a clean completion.  It only provides
next-token probabilities on the prefix that the student has already
visited.  Thus canonical relabeling changes the conditioning context of
the scorer, not the answer prefix being scored.

As in on-policy distillation, the rollout operation in
Equation~\eqref{eq:ccopd-rollout} selects the states but is treated as
stop-gradient.  Gradients flow through the next-token student
distribution evaluated at the selected prefixes.  
% In the main
% experiments, the expectation over rollouts is approximated with one
% temperature-zero rollout per retained pair, matching the decoding path
% used in evaluation.

\subsection{Answer-Masked Reverse-KL Objective}
\label{sec:method:objective}

Substituting Equation~\eqref{eq:ccopd-dist} into the canonical-deviation potential
gives the average deviation along the student rollout:
\begin{equation}
\begin{aligned}
    \bar{\Psi}_{\theta}(\task,\hat{y})
    =
    \frac{1}{|T_{\mathrm{ans}}(\hat{y})|}
    \sum_{t\in T_{\mathrm{ans}}(\hat{y})}
    \KL\!\left(
        p^{h}_{\theta,t}
        \;\middle\|\;
        p^{c}_{0,t}
    \right).
\end{aligned}
\label{eq:ccopd-potential-average}
\end{equation}
The implemented per-rollout loss is this quantity,
\begin{equation}
    \ell_{\mathrm{ccopd}}(\task,\hat{y})
    =
    \bar{\Psi}_{\theta}(\task,\hat{y}),
    \label{eq:ccopd-example}
\end{equation}
and the ideal objective is
\begin{equation}
\begin{aligned}
    \mathcal{L}_{\mathrm{ccopd}}(\theta)
    =
    \mathbb{E}_{(c,h)\sim\mathcal{D}_{\mathrm{pair}}}
    \;\mathbb{E}_{\hat{y}\sim\student(\cdot\mid h)}
    \left[
        \ell_{\mathrm{ccopd}}(\task,\hat{y})
    \right].
\end{aligned}
\label{eq:ccopd}
\end{equation}
The finite training loss estimates Equation~\eqref{eq:ccopd} by
averaging Equation~\eqref{eq:ccopd-example} over minibatches.
Normalizing by $|T_{\mathrm{ans}}(\hat{y})|$ prevents long answers from
dominating the objective.

The final-answer mask is part of the objective, not an implementation
detail.  Intermediate assistant turns in a \raw{} conversation are
produced before all user evidence is available; forcing those turns to
match a \full{}-conditioned teacher would leak future evidence and
penalize reasonable partial-information behavior.  CCOPD therefore
optimizes canonical alignment only after the final user turn, where the
history and the canonical prompt are task-equivalent.

\paragraph{Sequence-level view.}
Let $P^{h}_{\theta}$ be the distribution over complete final answers
induced by the student under $h$, and let $P^{c}_{0}$ be the analogous
distribution induced by the frozen base model under $c$.  Under an ideal
stochastic formulation, the autoregressive chain rule decomposes
$\KL(P^{h}_{\theta}\|P^{c}_{0})$ into the expected sum of the same-prefix
token KL terms along student-sampled trajectories.  Equation~\eqref{eq:ccopd}
can therefore be read as a practical, answer-masked estimator of
answer-level alignment to the canonical full-context behavior.  Appendix~\ref{app:proof} states the exact bridge and its assumptions.

% \subsection{Algorithm and Design Rationale}
% \label{sec:method:algorithm}

\Cref{fig:method} shows training step.  The important
asymmetry is that the student path and teacher path receive different
contexts but score the same prefix.  This yields a signal that is both
cross-context and on-policy.

% \begin{algorithm}[h]

% \caption{One \ccopd{} training step}
% \label{alg:ccopd}
% \begin{algorithmic}[1]
% \Require canonical prompt $c$, final-turn history $h$, trainable student $\student$, frozen teacher $\base$
% \State Generate rollout $\hat{y}_{1:T} \sim \student(\cdot \mid h)$
% \State Identify final-answer positions $T_{\mathrm{ans}}(\hat{y})$
% \For{$t \in T_{\mathrm{ans}}(\hat{y})$}
%     \State Compute $p^{h}_{\theta,t}=\student(\cdot \mid h,\hat{y}_{<t})$
%     \State Compute $p^{c}_{0,t}=\base(\cdot \mid c,\hat{y}_{<t})$
%     \State Accumulate $\KL(p^{h}_{\theta,t}\|p^{c}_{0,t})$
% \EndFor
% \State Update only the student using the averaged KL loss
% \end{algorithmic}
% \end{algorithm}

% Experiment section for an ACL/EMNLP two-column main paper.
% Requires: booktabs, graphicx, amsmath, natbib, cleveref. The main table intentionally omits WAIT.

\providecommand{\ccopd}{\textsc{CCOPD}}
\providecommand{\full}{\textsc{Full}}
\providecommand{\concat}{\textsc{Concat}}
\providecommand{\raw}{\textsc{Raw-Sharded}}
\providecommand{\waittraj}{\textsc{Wait}}
\providecommand{\useronly}{\textsc{User-Only}}
\providecommand{\NA}{--}
\providecommand{\best}[1]{\textbf{#1}}
\providecommand{\ccopd}{\textsc{CCOPD}}
\providecommand{\full}{\textsc{Full}}
\providecommand{\concat}{\textsc{Concat}}
\providecommand{\waittraj}{\textsc{Wait}}
\providecommand{\useronly}{\textsc{User-Only}}
\providecommand{\raw}{\textsc{Raw-Sharded}}
\providecommand{\na}{--}
\providecommand{\pending}{\textit{pending}}
\providecommand{\fcr}[3]{#1\,/\,#2\,/\,#3}
\providecommand{\best}[1]{\textbf{#1}}
\providecommand{\grc}{\textsc{GRC}}
\providecommand{\cgdc}{\textsc{CGDC}}
\newcommand{\finding}[2]{%
\begin{tcolorbox}[
    colback=gray!6,
    colframe=teal!60!black,
    arc=4pt,
    boxrule=0.7pt,
    left=1mm,
    right=1mm,
    top=1mm,
    bottom=1mm,
    enhanced,
    drop shadow
]
\noindent\textbf{\textit{Finding #1:}}~\textit{#2}
\end{tcolorbox}%
}

\section{Experiments and Analysis}
\label{sec:experiments}

We evaluate whether \ccopd{} improves canonical-context consistency after the user evidence is complete. The target setting is \raw{}; \full{} measures clean one-shot capability, and \concat{} checks whether the released shards preserve the task information. 

% \textbf{RQ1: Completed-interaction accuracy.}
% Once all task-relevant instructions have been revealed, does \ccopd{} improve final-answer performance in multi-turn interactions while preserving performance when the same complete instruction is given in a clean single-turn prompt?

% \textbf{RQ2: Cross-domain transfer}.When complete instructions are revealed gradually across turns, does a model post-trained with \ccopd{} on a single source domain improve final-answer performance in held-out task families without target-domain training examples?

% \textbf{RQ3: Evidence grounding under contaminated histories.}
% Are \ccopd{}'s gains associated with stronger grounding in the completed user evidence and reduced sensitivity to assistant-side commitments, rather than generic post-training or history-cleaning effects?

% We first describe the protocol, including the fixed-history diagnostics used for RQ3, and then report task results followed by mechanism-oriented audits.

\subsection{Experimental Setup}
\label{sec:exp-setup}

\paragraph{Models.}
Our primary experiments use Qwen3-8B with a LoRA adapter trained by the reverse-KL \ccopd{} objective in \cref{eq:ccopd} \citep{yang2025qwen3,hu2022lora}. For the main Qwen3-8B run, we optimize the LoRA adapter with AdamW using a learning rate of $3\times10^{-5}$; full training details are reported in \Cref{tab:training-details}. To assess scale and family robustness, we additionally train adapters for Qwen3-4B and Llama3.1-8B \citep{llama3-herd-2024} in the scaling analysis.

\paragraph{Training data and pair construction.}
Training is restricted to math. We construct sharded conversations from GSM8K and GSM8K-Aug \citep{cobbe2021gsm8k,deng2024explicit}. For each base model, we generate on-policy \raw{} interactions from the same shard specification.The primary Qwen3-8B adapter is trained on 6k retained pairs after screening.

\paragraph{Evaluation protocol.}
We follow the released sharded-instruction protocol from Lost in Conversation \citep{laban2025lost}. The evaluation suite covers six task families: math word problems from GSM8K-style examples \citep{cobbe2021gsm8k}, code generation from HumanEval and LiveCodeBench-derived tests \citep{chen2021codex,jain2024livecodebench}, function calling from BFCL-style tasks \citep{patil2025bfcl}, text-to-SQL from Spider \citep{yu2018spider}, table-to-text generation from ToTTo \citep{parikh2020totto}, and long-context summarization from SummHay \citep{laban2024summhay}. The non-math tasks are zero-shot transfer settings for the math-trained adapter. We report the mean over 10 independent end-to-end sharded runs with newly sampled on-policy conversations. details are given in
Appendix~\ref{app:raw-shard-eval}.

\begin{table*}[t]
\centering
\scriptsize
\setlength{\tabcolsep}{2.55pt}
\renewcommand{\arraystretch}{1.08}
\resizebox{\textwidth}{!}{%
\begin{tabular}{@{}lcccccc@{}}
\toprule
Model / variant & Math F/C/R & Structured OOD F/C/R & Generation OOD F/C/R & All F/C & OOD-R & All-R \\
\midrule
\rowcolor{gray!10}\multicolumn{7}{@{}l}{\textit{Model-size and model-family transfer}} \\
Qwen3-14B Base & \fcr{96.1}{92.9}{71.8} & \fcr{92.1}{83.2}{52.6} & \fcr{34.1}{31.9}{20.3} & 70.3 & 39.5 & 44.8 \\
Qwen3-4B Base & \fcr{89.3}{84.5}{55.3} & \fcr{80.1}{74.0}{41.0} & \fcr{22.4}{22.6}{11.5} & 60.2 & 29.1 & 33.4 \\
\rowcolor{blue!6}Qwen3-4B \ccopd{} & \fcr{89.3}{83.5}{56.9 (+1.6)} & \fcr{83.7}{76.0}{63.8} & \fcr{21.0}{18.4}{17.2} & 60.6 (+0.3) & \best{44.9 (+15.9)} & \best{46.9 (+13.5)} \\
Llama3.1-8B Base & \fcr{71.8}{70.9}{46.6} & \fcr{67.6}{63.7}{37.5} & \fcr{13.0}{16.3}{13.1} & 49.3 & 27.6 & 30.8 \\
\rowcolor{blue!6}Llama3.1-8B \ccopd{} & \fcr{74.8}{73.8}{53.4 (+6.8)} & \fcr{67.3}{64.1}{52.7} & \fcr{14.8}{13.8}{13.4} & 49.7 (+0.4) & \best{36.8 (+9.1)} & \best{39.5 (+8.8)} \\
\midrule
\rowcolor{gray!10}\multicolumn{7}{@{}l}{\textit{Qwen3-8B: objectives and teacher sources}} \\
Qwen3-8B Forward-\ccopd{} & \fcr{93.2}{91.3}{79.0 (+13.0)} & \fcr{84.6}{76.0}{65.4} & \fcr{27.0}{23.3}{20.4} & 63.6 (-0.8) & 47.2 (+10.4) & 52.4 (+10.8) \\
Qwen3-8B 4B-teacher \ccopd{} & \fcr{90.3}{88.4}{68.0 (+1.9)} & \fcr{85.6}{78.8}{67.6} & \fcr{26.4}{23.2}{21.1} & 64.0 (-0.4) & 48.8 (+12.0) & 51.9 (+10.3) \\
Qwen3-8B 14B-teacher \ccopd{} & \fcr{92.2}{86.4}{69.9 (+3.9)} & \fcr{84.6}{78.2}{66.3} & \fcr{27.8}{23.9}{20.6} & 63.9 (-0.5) & 47.8 (+11.0) & 51.4 (+9.8) \\
\midrule
\rowcolor{gray!10}\multicolumn{7}{@{}l}{\textit{Qwen3-8B: complete  baselines}} \\
Qwen3-8B Base & \fcr{90.3}{87.4}{66.0} & \fcr{85.3}{78.8}{48.7} & \fcr{28.4}{24.7}{19.4} & 64.4 & 36.8 & 41.6 \\
Qwen3-8B SFT & \fcr{93.2}{91.3}{66.0 (+0.0)} & \fcr{86.9}{76.9}{64.4} & \fcr{27.4}{22.4}{19.0} & 64.3 (-0.1) & 46.0 (+9.2) & 49.3 (+7.7) \\
Qwen3-8B OPSD & \fcr{86.0}{85.0}{56.3 (-9.7)} & \fcr{87.8}{77.9}{54.8} & \fcr{27.7}{23.8}{17.7} & 64.0 (-0.4) & 39.8 (+3.0) & 42.5 (+0.9) \\
Qwen3-8B GRPO & \fcr{91.3}{90.3}{74.8 (+8.8)} & \fcr{86.9}{84.0}{45.8} & \fcr{28.4}{24.9}{19.2} & 66.4 (+2.0) & 35.0 (-1.8) & 41.6 (-0.1) \\
\rowcolor{blue!14}Qwen3-8B \ccopd{} & \fcr{90.3}{88.4}{\best{82.5 (+16.5)}} & \fcr{86.2}{77.9}{\best{67.9}} & \fcr{27.9}{23.3}{\best{22.9}} & 64.2 (-0.2) & \best{49.7 (+12.9)} & \best{55.1 (+13.5)} \\
\bottomrule
\end{tabular}%
}
\caption{Master results with weighted aggregate scores. F/C/R denotes \full{}/\concat{}/\raw{}. Structured OOD averages code, function calling, and text-to-SQL; Generation OOD averages table-to-text and summarization. All F/C is a single clean-presentation score obtained by averaging the weighted \full{} and weighted \concat{} scores across all six task families. OOD-R averages non-math \raw{} scores; All-R averages all six task families. Parenthesized values indicate changes against the matching base row under the same weighted scope; in Math F/C/R, only the R component carries the parenthesized RAW delta.}
\label{tab:master-results}
\end{table*}

\paragraph{Compared systems.}
\textbf{Base} is the unmodified backbone and defines deltas.
For data-matched training comparisons, \textbf{SFT} uses the same retained
final-turn histories $h$ and the same final-answer mask as \ccopd{}, but
replaces distributional alignment with supervised final-answer targets;
\textbf{GRPO} uses the same math source and final-answer reward/evaluation
protocol. \textbf{OPSD} is a related on-policy
distillation comparison with privileged supervision.
\textbf{Forward-\ccopd{}} is token-level control that keep the retained
histories, final-answer positions, and student prefixes fixed, while
changing only the KL direction or teacher conditioning context.
\textbf{Teacher-source} rows replace the same-backbone \full{} teacher
with smaller or larger teachers.

\subsection{Main Results}
\label{sec:exp-main}
\Cref{tab:master-results} summarizes the main results; \Cref{tab:ood-fcr-details} gives per-domain details.

\paragraph{RAW improves without clean-task drift.}
On Qwen3-8B, \ccopd{} raises math \raw{} accuracy from 66.0 to 82.5
(+16.5 points), while math \full{} remains unchanged at 90.3.
\ccopd{} also outperforms SFT and GRPO on math \raw{}, suggesting that
the observed \raw{} gains are not recovered by standard post-training
objectives alone.

\paragraph{Math-only training transfers out of domain.}
The same math-trained adapter improves the non-math \raw{} average from
36.8 to 49.7 (+12.9 points). Structured OOD \raw{} rises from 48.7 to
67.9 (+19.2), and generation OOD \raw{} rises from 19.4 to 22.9 (+3.5).
Because training uses only math conversations, this pattern is more
consistent with transferable history grounding than with a math-format
heuristic. Qwen3-4B and Llama3.1-8B show the same aggregate \raw{}
direction.

We further train \ccopd{} on non-math HotpotQA\cite{yang2018hotpotqa} sharded histories and still
observe an improvement on math \raw{} ,
suggesting that the signal is not tied to math-format supervision alone;
details are in Appendix~\ref{app:source-domain}.

%还需要一个find
% \finding{3}{Generic baselines do not explain the pattern.}
% Base exposes the gap; SFT improves some OOD rows but leaves math \raw{} below the base; OPSD-style distillation is weaker on the available OOD average; GRPO is strong on the partial math-plus-structured scope but lacks generation OOD coverage. Among complete same-backbone rows, \ccopd{} gives the best All-R while preserving clean presentations. Larger-teacher rows are competitive but change the method into cross-model distillation.

\begin{figure*}[t]
\centering
\begin{minipage}[t]{0.32\linewidth}
    \centering
    \includegraphics[width=\linewidth]{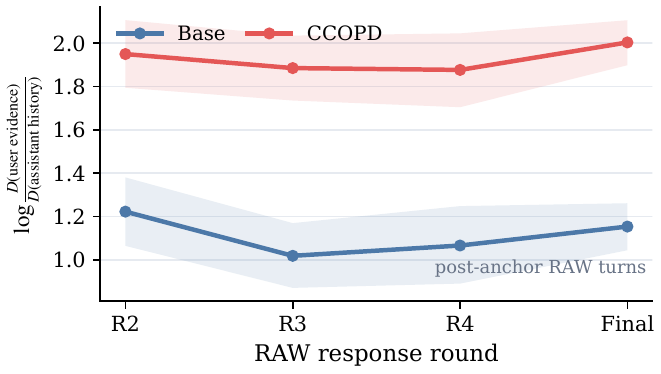}
    \centerline{\scriptsize (a) Turn-level evidence focus}
\end{minipage}
\hfill
\begin{minipage}[t]{0.32\linewidth}
    \centering
    \includegraphics[width=\linewidth]{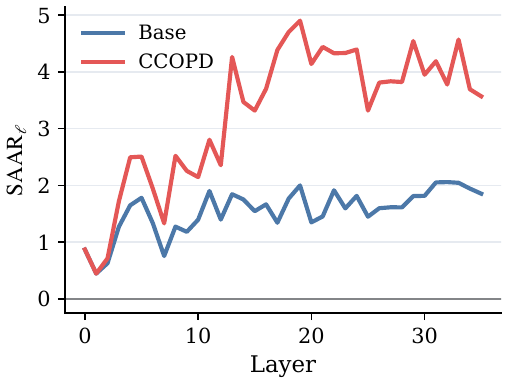}
    \centerline{\scriptsize (b) Final-state SAAR}
\end{minipage}
\hfill
\begin{minipage}[t]{0.32\linewidth}
    \centering
    \includegraphics[width=\linewidth]{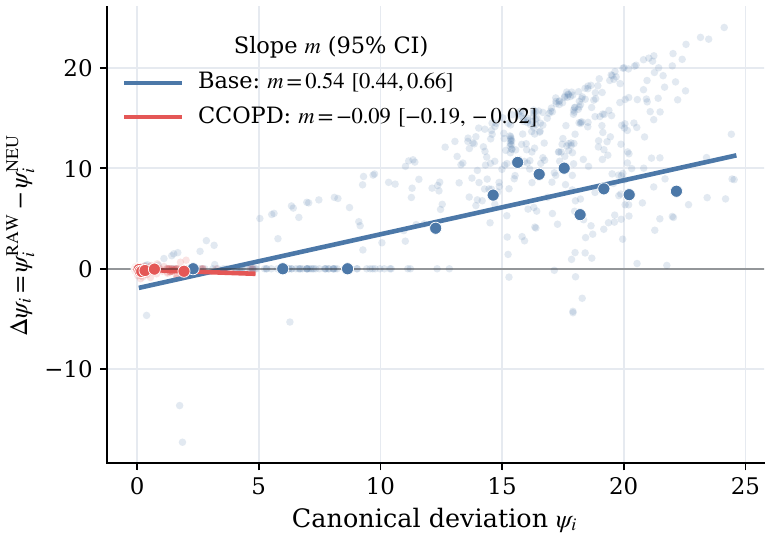}
    \centerline{\scriptsize (c) Neutral replacement}
\end{minipage}

\caption{
Mechanism diagnostics for self-anchored drift.
(a) \ccopd{} maintains stronger user-evidence focus across \raw{}
response rounds after process replies enter the context.
(b--c) At matched final-answer prefixes, \ccopd{} increases SAAR and
flattens assistant-associated canonical deviation, reducing the
neutral-replacement slope from $0.537$ to $-0.087$.
}
\label{fig:mechanism-diagnostics}

\end{figure*}
\subsection{Ablations and Mechanism Diagnostics}

\paragraph{Teacher source.}
Teacher-source ablations in Table~1 suggest that scale is not the main
bottleneck. A smaller 4B \full{} teacher still improves the weighted
\raw{} aggregate over the base model, while a larger 14B teacher improves
over the base but remains below the same-backbone teacher. This pattern
supports the intended role of the teacher in \ccopd{}: it need not supply
new task knowledge, but should provide a compatible canonical presentation
of the same evidence. Cross-model mismatch may partly limit external
teachers, especially at larger scale.

\begin{table}[t]
\centering
\small
\setlength{\tabcolsep}{4pt}
\renewcommand{\arraystretch}{1.05}
\begin{tabular}{lcc}
\toprule
Condition & Base  & \ccopd{}  \\
\midrule
Clean \full{} problem & 90.3 & 90.3 \\
+ Assistant-side wrong solution & 33.0 & 89.3 \\
+ User-side wrong answer hint & 63.1 & 88.3 \\
\bottomrule
\end{tabular}
\caption{
Full-context pollution stress test on math.
}
\label{tab:full-context-pollution}
\end{table}

\paragraph{KL direction.}
Forward-CCOPD keeps the student rollouts, \full{} teacher, final-answer
mask, and same-prefix scoring fixed, but replaces reverse KL with forward
KL. This control tests both whether fine-grained same-prefix canonical
supervision helps over the base model and whether the KL direction
matters beyond that supervision. Forward-CCOPD improves substantially
over the base, but remains below reverse-KL \ccopd{}. We attribute the gap
to the objectives: forward KL encourages the \raw{} student to cover the
\full{} teacher's likely continuations, whereas reverse KL penalizes
off-canonical probability mass assigned by the deployed \raw{} policy.
Because self-anchored drift appears in the policy used at inference time,
reverse KL better matches the intended intervention.

\paragraph{History-stress diagnostics.}
We test grounding under matched polluted histories. Starting from full
math problems, we insert the same per-example wrong numeric anchor either
as an assistant-side prior solution or as a user-side answer hint; within
each condition, both models receive the same evaluated history.
\Cref{tab:full-context-pollution} shows that the two models match on the
clean \full{} condition, but the base model collapses under pollution
($90.3\!\to\!33.0/63.1$), whereas \ccopd{} remains near clean performance
($90.3\!\to\!89.3/88.3$). This suggests reduced sensitivity to misleading
anchors when complete user evidence is available.

\paragraph{Evidence focus across process and final-answer states.}
We separate trajectory-level evidence focus from final-state anchoring in
\Cref{fig:mechanism-diagnostics}. At the \raw{} trajectory level,
\Cref{fig:mechanism-diagnostics}a tracks attention allocation across
response rounds: after process replies enter the history, \ccopd{}
maintains higher relative attention to task-relevant user evidence than
to process history. At completed final-answer states,
\Cref{fig:mechanism-diagnostics}b--c apply the same-prefix diagnostics
from \Cref{sec:problem}: SAAR compares completed user-evidence spans with
process-reply commitment spans, and neutral replacement measures
assistant-associated canonical deviation under fixed user shards.
\ccopd{} raises  layer SAAR and flattens the
neutral-replacement lower-envelope slope from $0.537$ to $-0.087$.
Together, these probes are consistent with reduced self-anchored drift:
the trained model attends more to completed user evidence and shows less
assistant-associated deviation at the final answer.

\paragraph{Lightweight test-time modes.}
We additionally evaluate two lightweight test-time modes: reset-then-answer (RTA)
and defer-until-complete (DUC). As shown in \Cref{tab:lightweight-modes}, direct
\ccopd{} is the strongest setting on math \raw{}. RTA helps
the base model, whereas DUC has little effect. Applying
the same modes to \ccopd{} slightly lowers accuracy, although both variants
remain above the corresponding base+mode rows. This pattern is consistent
with the possibility that \ccopd{} has already internalized part of the
task-state management that these prompts impose explicitly. Implementation
details are provided in Appendix~\ref{app:mode-details}.

\begin{table}[t]
\centering
\begin{tabular}{lccc}
\toprule
Model & Default & +RTA & +DUC \\
\midrule
Base & 66.0 & 72.8 & 67.0 \\
\ccopd{} & \textbf{82.5} & 79.6 & 77.7 \\
\bottomrule
\end{tabular}
\caption{Accuracy on math \raw{} under two lightweight test-time modes.}
\label{tab:lightweight-modes}
\end{table}

\FloatBarrier
\section{Conclusion}

This paper reframes \raw{} multi-turn failure as self-anchored canonical-context drift. The final history may contain all user evidence, but earlier assistant assumptions can still pull the model away from the answer distribution it would use under a clean \full{} prompt. \ccopd{} addresses this failure by distilling \raw{} final-answer states toward a frozen \full{} teacher on the same generated prefix. The method adds no inference-time component, improves on-policy sharded accuracy after math-only training, and transfers zero-shot to five non-math task types. Attention redistribution and WAIT replacement diagnostics suggest that the trained model becomes less sensitive to self-generated assumptions and more aligned with user evidence at the final answer.

\section*{Limitations}
Our experiments are still moderate in scale. The primary trained model is an 8B open-weight backbone with a LoRA adapter, trained on thousands rather than millions of retained canonical/history pairs. While this scale is sufficient to test the proposed mechanism, it does not establish how CCOPD behaves for substantially larger models, closed-source systems, or broader post-training corpora.
Our RAW-SHARDED setting is also a controlled approximation of multi-turn interaction. It is constructed by splitting complete benchmark tasks into shards, which enables a clean task-equivalent comparison against FULL and CONCAT prompts. However, real conversations may contain evolving intents, clarification requests, irrelevant side information, user corrections, and failures that cannot be reduced to deterministic shards of a complete prompt. Future work should collect organic multi-turn failures, construct human-verified canonical contexts that make those tasks answerable, and use such paired data to study self-anchored drift in more natural settings.Finally, CCOPD depends on a useful FULL-conditioned reference and only optimizes the final-answer state after all user evidence is available. It can improve consistency with the model’s clean full-context behavior, but it cannot correct errors made by the FULL teacher itself, nor does it directly train earlier-turn behaviors such as clarification, abstention, or avoiding premature commitments under incomplete information.

\bibliography{main}

\appendix
% Required packages:
% \usepackage{booktabs,graphicx}
% \usepackage[table]{xcolor}
% Assumed macros: \ccopd, \full, \concat, \raw.
\providecommand{\ccopd}{\textsc{CCOPD}}
\providecommand{\full}{\textsc{Full}}
\providecommand{\concat}{\textsc{Concat}}
\providecommand{\raw}{\textsc{Raw-Sharded}}
\providecommand{\na}{--}
\providecommand{\fcr}[3]{#1/#2/#3}
\providecommand{\best}[1]{\textbf{#1}}

\begin{table*}[t]
\centering
\scriptsize
\setlength{\tabcolsep}{2.6pt}
\renewcommand{\arraystretch}{1.08}
\resizebox{\textwidth}{!}{%
\begin{tabular}{@{}llccccc@{}}
\toprule
Model & Variant & Code F/C/R & Function F/C/R & Text-to-SQL F/C/R & Table-to-text F/C/R & Summary F/C/R \\
\midrule
\rowcolor{gray!10}\multicolumn{7}{@{}l}{\textit{Model-size and model-family transfer}} \\
Qwen3-14B & Base & \fcr{89.0}{72.0}{47.0} & \fcr{98.1}{99.1}{55.2} & \fcr{89.0}{78.0}{55.1} & \fcr{45.1}{43.2}{28.4} & \fcr{19.8}{17.2}{9.6} \\
Qwen3-4B & Base & \fcr{64.0}{50.0}{26.0} & \fcr{94.3}{99.1}{46.7} & \fcr{81.3}{72.0}{49.5} & \fcr{28.6}{31.8}{16.0} & \fcr{14.3}{10.7}{5.6} \\
\rowcolor{blue!6}Qwen3-4B & \ccopd{} & \fcr{74.0}{53.0}{38.0} & \fcr{95.2}{99.1}{81.9} & \fcr{81.3}{74.8}{70.1} & \fcr{27.0}{25.1}{22.4} & \fcr{13.2}{9.8}{10.4} \\
Llama3.1-8B & Base & \fcr{39.0}{32.0}{17.0} & \fcr{86.7}{88.3}{56.2} & \fcr{75.7}{69.2}{38.3} & \fcr{15.2}{21.9}{17.1} & \fcr{10.2}{9.0}{8.1} \\
\rowcolor{blue!6}Llama3.1-8B & \ccopd{} & \fcr{38.0}{34.0}{28.0} & \fcr{86.7}{86.7}{\best{85.1}} & \fcr{75.7}{70.1}{43.9} & \fcr{16.6}{19.4}{17.3} & \fcr{12.5}{6.5}{8.3} \\
\midrule
\rowcolor{gray!10}\multicolumn{7}{@{}l}{\textit{Qwen3-8B: objective and teacher-source ablations}} \\
Qwen3-8B & Forward-\ccopd{} & \fcr{74.0}{56.0}{46.0} & \fcr{96.2}{98.1}{79.1} & \fcr{83.2}{72.9}{70.1} & \fcr{36.4}{33.0}{27.6} & \fcr{14.7}{10.5}{11.1} \\
Qwen3-8B & 4B teacher & \fcr{75.0}{62.0}{46.0} & \fcr{97.1}{98.1}{80.0} & \fcr{84.1}{75.7}{\best{75.7}} & \fcr{34.7}{32.4}{29.1} & \fcr{15.6}{11.1}{10.7} \\
Qwen3-8B & 14B teacher & \fcr{70.0}{59.0}{46.0} & \fcr{98.1}{99.1}{84.8} & \fcr{85.1}{75.7}{67.3} & \fcr{37.8}{34.4}{27.1} & \fcr{14.8}{10.3}{12.1} \\
\midrule
\rowcolor{gray!10}\multicolumn{7}{@{}l}{\textit{Qwen3-8B: complete evaluation and baselines}} \\
Qwen3-8B & Base & \fcr{70.0}{62.0}{33.0} & \fcr{97.1}{99.1}{58.1} & \fcr{87.9}{74.8}{54.2} & \fcr{38.3}{35.6}{23.8} & \fcr{15.5}{10.4}{13.6} \\
Qwen3-8B & SFT & \fcr{75.0}{60.0}{39.0} & \fcr{98.1}{98.1}{84.8} & \fcr{86.9}{72.0}{68.2} & \fcr{36.0}{31.4}{23.7} & \fcr{16.0}{10.6}{12.8} \\
Qwen3-8B & OPSD & \fcr{77.0}{55.0}{35.0} & \fcr{98.1}{99.1}{67.6} & \fcr{87.9}{78.5}{60.8} & \fcr{38.7}{36.1}{24.1} & \fcr{13.2}{7.7}{9.4} \\
Qwen3-8B & GRPO & \fcr{75.0}{75.0}{33.0} & \fcr{97.1}{99.1}{54.3} & \fcr{87.9}{77.6}{49.5} & \fcr{38.0}{35.6}{24.1} & \fcr{15.9}{10.9}{12.8} \\
\rowcolor{blue!14}Qwen3-8B & \ccopd{} & \fcr{75.0}{60.0}{\best{49.0}} & \fcr{98.1}{99.1}{83.8} & \fcr{85.1}{73.8}{70.1} & \fcr{37.3}{32.9}{\best{29.4}} & \fcr{15.6}{10.7}{\best{14.5}} \\
\bottomrule
\end{tabular}%
}
\caption{OOD task details. Each cell reports \full{}/\concat{}/\raw{} (F/C/R).  }
\label{tab:ood-fcr-details}
\end{table*}

\section{Proof Details and Implementation Notes for \ccopd{}}
\label{app:proof}

\subsection{Answer-Event Control}
\label{app:answer-event-control}

Fix a task instance \(\task\). Write \(h=\hist(\task)\) for the completed multi-turn history and \(c=\canon(\task)\) for the canonical \full{} prompt. Let \(P^h_\theta\) be the distribution over complete final-answer strings induced by \(\student(\cdot\mid h)\), and let \(P^c_0\) be the corresponding distribution induced by \(\base(\cdot\mid c)\). We include EOS in the answer vocabulary and map any sequence that reaches the generation budget before EOS to a distinguished truncation symbol. Under this convention, both distributions are defined over the same finite terminal-answer space \(\mathcal{A}\). We also assume \(P^h_\theta\ll P^c_0\) on \(\mathcal{A}\), so the KL terms below are well defined.

\begin{lemma}[On-policy chain rule]
\label{lem:on-policy-chain-rule}
For \(y\sim P^h_\theta\) with terminal length \(\tau(y)\),
\begin{equation}
\begin{aligned}
    \KL(P^h_\theta\|P^c_0)
    =
    \mathbb{E}_{y\sim P^h_\theta}
    \left[
        \sum_{t=1}^{\tau(y)}
        d_t(y)
    \right],
\end{aligned}
\label{eq:appendix-chain-rule}
\end{equation}
where
\begin{equation}
    d_t(y)
    =
    \KL\!\left(
        \student(\cdot\mid h,y_{<t})
        \middle\|
        \base(\cdot\mid c,y_{<t})
    \right).
\label{eq:appendix-dt}
\end{equation}
\end{lemma}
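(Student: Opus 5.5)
The plan is the standard autoregressive chain rule for KL, made explicit with a stopping-time style sum. First I will factor both sequence distributions. Every terminal answer \(y\in\mathcal{A}\) is a finite token string \(y_{1:\tau(y)}\) that ends either in EOS or in the truncation symbol. Its probability is the product of next-token conditionals, so \(P^h_\theta(y)=\prod_{t=1}^{\tau(y)}\student(y_t\mid h,y_{<t})\), and similarly for \(P^c_0\) with \(\base(\cdot\mid c,\cdot)\). The truncation symbol needs a little care: I will treat it as a deterministic forced token emitted once the budget is reached without EOS. Its conditional probability is then \(1\) under both models, so it contributes \(0\) to every \(d_t\). With this convention, the set of terminal strings is prefix-free and finite, and the two factorizations hold exactly.

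Next, I will take the log ratio and sum it, giving
\(\log\frac{P^h_\theta(y)}{P^c_0(y)}=\sum_{t=1}^{\tau(y)}\log\frac{\student(y_t\mid h,y_{<t})}{\base(y_t\mid c,y_{<t})}\).
Absolute continuity \(P^h_\theta\ll P^c_0\) guarantees that every factor appearing with positive \(P^h_\theta\)-probability has a positive denominator, so each term is finite. Taking the expectation under \(P^h_\theta\) gives \(\KL(P^h_\theta\|P^c_0)=\mathbb{E}\bigl[\sum_{t\le\tau(y)} L_t\bigr]\), where \(L_t\) is the per-token log ratio.

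The key step is to replace each realized log ratio \(L_t\) by its conditional expectation, which equals \(d_t(y)\). I will rewrite the sum as \(\sum_{t=1}^{T_{\max}}\mathbf{1}\{\tau(y)\ge t\}L_t\), where \(T_{\max}\) is the generation budget plus one. This is a finite sum, so linearity of expectation applies with no convergence issues. The event \(\{\tau(y)\ge t\}\) is determined by the prefix \(y_{<t}\), because it says that no terminal token has appeared before position \(t\). So by the tower property, conditioning on \(y_{<t}\) gives
\(\mathbb{E}[\mathbf{1}\{\tau\ge t\}L_t]=\mathbb{E}\bigl[\mathbf{1}\{\tau\ge t\}\,\mathbb{E}[L_t\mid y_{<t}]\bigr]\).
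Under \(P^h_\theta\), the conditional law of \(y_t\) given \(y_{<t}\) is exactly \(\student(\cdot\mid h,y_{<t})\). Therefore \(\mathbb{E}[L_t\mid y_{<t}]=d_t(y)\), where \(d_t\) is the quantity in Eq.~\eqref{eq:appendix-dt}. Reassembling the finite sum yields Eq.~\eqref{eq:appendix-chain-rule}.

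The main obstacle is measure-theoretic bookkeeping rather than any deep idea. First, each \(d_t\) must be finite on prefixes that \(P^h_\theta\) actually reaches. For this I will assume the token-level absolute continuity \(\student(\cdot\mid h,y_{<t})\ll\base(\cdot\mid c,y_{<t})\) on reachable prefixes. Sequence-level \(P^h_\theta\ll P^c_0\) only gives this for tokens lying on terminal paths of positive mass, but on a finite prefix-free space every reachable token lies on such a path. Second, the truncation convention must be checked to leave the factorization and the prefix-measurability of \(\{\tau\ge t\}\) intact. Third, I will note that the exchange of sum and expectation is trivial here because the budget bounds \(\tau\).
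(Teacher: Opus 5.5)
Your proposal is correct and follows the same route as the paper's proof. Both factor the two sequence laws autoregressively, expand $\log\bigl(P^h_\theta(y)/P^c_0(y)\bigr)$ into per-token log ratios, and condition on the prefix $y_{<t}$ to turn each expected log ratio into $d_t(y)$. Your additional bookkeeping is a welcome tightening of what the paper leaves implicit when it conditions on a random-length prefix: the prefix-measurable indicator $\mathbf{1}\{\tau(y)\ge t\}$ over a finite horizon, the forced-truncation-token convention that keeps the product factorization exact, and deriving token-level absolute continuity from $P^h_\theta\ll P^c_0$.
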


\begin{proof}
For any terminal answer string \(y=(y_1,\ldots,y_{\tau(y)})\in\mathcal{A}\), the two autoregressive distributions factor as
\begin{align}
    P^h_\theta(y)
    &=
    \prod_{t=1}^{\tau(y)}
    \student(y_t\mid h,y_{<t}),
    \label{eq:appendix-student-factor}\\
    P^c_0(y)
    &=
    \prod_{t=1}^{\tau(y)}
    \base(y_t\mid c,y_{<t}).
    \label{eq:appendix-teacher-factor}
\end{align}
Substituting these factorizations into sequence-level KL gives
\begin{equation}
\begin{aligned}
    \KL(P^h_\theta\|P^c_0)
    &=
    \sum_{y\in\mathcal{A}}
    P^h_\theta(y)
    \log\frac{P^h_\theta(y)}{P^c_0(y)} \\
    &=
    \mathbb{E}_{y\sim P^h_\theta}
    \left[
        \sum_{t=1}^{\tau(y)}
        \log
        \frac{
            \student(y_t\mid h,y_{<t})
        }{
            \base(y_t\mid c,y_{<t})
        }
    \right].
\end{aligned}
\label{eq:appendix-sequence-kl-expand}
\end{equation}
Conditioning on a realized prefix \(y_{<t}\) under \(P^h_\theta\), the next token \(y_t\) is distributed as \(\student(\cdot\mid h,y_{<t})\). Therefore,
\begin{equation}
\begin{aligned}
    &\mathbb{E}\!\left[
        \log
        \frac{
            \student(y_t\mid h,y_{<t})
        }{
            \base(y_t\mid c,y_{<t})
        }
        \middle| y_{<t}
    \right] \\
    &\qquad=
    \KL\!\left(
        \student(\cdot\mid h,y_{<t})
        \middle\|
        \base(\cdot\mid c,y_{<t})
    \right).
\end{aligned}
\label{eq:appendix-conditional-kl}
\end{equation}
Applying \cref{eq:appendix-conditional-kl} inside \cref{eq:appendix-sequence-kl-expand} proves \cref{eq:appendix-chain-rule}.
\end{proof}

By the definition of canonical deviation in the main text, \(d_t(y)=\Psi_\theta(\task,y_{<t})\). Thus, the token-level KL minimized by \ccopd{} is the on-policy decomposition of the sequence-level divergence between the history-conditioned student and the canonical \full{} teacher.

\begin{proposition}[Answer-event control]
\label{prop:answer-event-control}
For any answer event \(E\subseteq\mathcal{A}\), if \(\KL(P^h_\theta\|P^c_0)\le \epsilon\), then
\begin{equation}
    \left|P^h_\theta(E)-P^c_0(E)\right|
    \le
    \sqrt{\epsilon/2}.
\label{eq:appendix-event-bound}
\end{equation}
\end{proposition}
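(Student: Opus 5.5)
This is Pinsker's inequality applied to the two distributions over the common finite answer space \(\mathcal{A}\). The plan has three steps: reduce the event discrepancy to total variation, invoke Pinsker, and substitute the hypothesis \(\KL(P^h_\theta\|P^c_0)\le\epsilon\). \Cref{lem:on-policy-chain-rule} is not needed for the bound itself. It only explains why the on-policy token sum controls \(\epsilon\).

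\textbf{Step 1 (events to total variation).} Define
\[
\mathrm{TV}(P^h_\theta,P^c_0)=\sup_{E\subseteq\mathcal{A}}\bigl|P^h_\theta(E)-P^c_0(E)\bigr|=\tfrac12\sum_{y\in\mathcal{A}}\bigl|P^h_\theta(y)-P^c_0(y)\bigr|.
\]
The equality follows by taking \(E^\star=\{y: P^h_\theta(y)>P^c_0(y)\}\). Since \(\mathcal{A}\) is finite (EOS plus the truncation symbol), there are no measurability issues, and any fixed \(E\) satisfies \(|P^h_\theta(E)-P^c_0(E)|\le\mathrm{TV}\).

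\textbf{Step 2 (Pinsker).} Show \(\mathrm{TV}(P,Q)\le\sqrt{\KL(P\|Q)/2}\) with natural logarithms. I would either cite it as standard or prove it in two stages.

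First, reduce to two-point distributions. Apply the data-processing inequality for KL to the coarsening map \(y\mapsto \mathbf{1}[y\in E]\). This gives \(\KL(\mathrm{Ber}(p)\|\mathrm{Ber}(q))\le\KL(P\|Q)\), where \(p=P(E)\) and \(q=Q(E)\).

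Second, prove the binary inequality \(p\log\frac pq+(1-p)\log\frac{1-p}{1-q}\ge 2(p-q)^2\). Fix \(p\) and set \(g(q)\) equal to the left side minus the right side. Then \(g(p)=0\), and \(g'(q)=(q-p)\bigl(\tfrac{1}{q(1-q)}-4\bigr)\). Because \(q(1-q)\le 1/4\), the bracket is nonnegative, so \(g\) is decreasing for \(q<p\) and increasing for \(q>p\). Hence \(g\ge0\).

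The absolute-continuity assumption \(P^h_\theta\ll P^c_0\) keeps every quantity finite. If \(q\in\{0,1\}\) while \(p\neq q\), the KL is infinite and the bound is trivial, so this case needs no separate work.

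\textbf{Step 3.} Combining the two steps gives \(|P^h_\theta(E)-P^c_0(E)|\le\sqrt{\KL(P^h_\theta\|P^c_0)/2}\le\sqrt{\epsilon/2}\), which is \cref{eq:appendix-event-bound}.

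The only mildly delicate point is the boundary case \(q\in\{0,1\}\) in the binary calculus argument, which the convention \(P^h_\theta\ll P^c_0\) disposes of. Otherwise the proof is routine.
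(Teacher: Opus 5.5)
Your proposal is correct and follows the same route as the paper. You bound $|P^h_\theta(E)-P^c_0(E)|$ by total variation and then apply Pinsker's inequality, which the paper simply cites while you also sketch a valid proof of it via data processing onto the indicator of $E$ and the binary inequality, with $g'(q)=(q-p)\bigl(\tfrac{1}{q(1-q)}-4\bigr)$ checking out.
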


\begin{proof}
Pinsker's inequality gives
\begin{equation}
    \mathrm{TV}(P^h_\theta,P^c_0)
    \le
    \sqrt{\frac{1}{2}\KL(P^h_\theta\|P^c_0)}.
\end{equation}
For any event \(E\), absolute probability difference is bounded by total variation:
\begin{equation}
    \left|P^h_\theta(E)-P^c_0(E)\right|
    \le
    \mathrm{TV}(P^h_\theta,P^c_0).
\end{equation}
Combining the two inequalities proves \cref{eq:appendix-event-bound}.
\end{proof}

\paragraph{Relation to the implemented objective.}
The chain-rule argument above describes an ideal stochastic objective over complete final-answer strings. The implemented objective in \cref{eq:ccopd} is a practical surrogate: it uses a finite number of student rollouts, applies the loss only to final-answer positions \(T_{\mathrm{ans}}(\hat{y})\), normalizes by answer length.
% , and in the main experiments uses deterministic rollouts. We therefore use the result as a bridge from token-level on-policy KL to answer-level event control, not as a finite-training guarantee.

\subsection{Pair Construction, Teacher Screening, and Leakage Blocking}
\label{app:pairing}

The pair construction enforces task equivalence at the final answer state. The canonical side contains \(\canon(\task)=\mathrm{FULL}(\task)\). The student side contains the retained multi-turn history ending at the final user turn. A \raw{} example is kept only if metadata verifies that all user shards have been revealed, that the retained history ends with a user message, and that teacher supervision applies only to the next assistant rollout after that final user turn. Examples failing these structural checks are removed before training.

% For the primary math training pool, we additionally screen the canonical teacher by correctness under the standard numeric evaluator. A candidate is kept only if the frozen base model's \full{} answer is judged correct. This avoids distilling from examples whose canonical teacher is already incorrect under the gold evaluator.

\ccopd{} also blocks full-prompt leakage into the student path. The student prompt is exactly the retained sharded history. The canonical \full{} prompt is used only inside the teacher forward pass, with the adapter disabled. Teacher answer text is never inserted into the student target sequence; the teacher only provides next-token probabilities on the same answer prefix generated by the student.

\subsection{Practical Implementation Notes}
\label{app:implementation-notes}

The set \(T_{\mathrm{ans}}(\hat{y})\) contains only positions belonging to the final assistant answer after the last user turn. The loss is normalized by \(|T_{\mathrm{ans}}(\hat{y})|\) so that longer answers do not automatically dominate the objective. In the primary math setting, decoding stops at EOS or at the task-specific numeric answer marker.
% , and the main experiments approximate the on-policy objective with one deterministic rollout per retained example. These choices align training with the deployment decoding path, while narrowing the set of prefixes on which the ideal stochastic KL bridge is directly enforced.

% Appendix snippet for ACL-style two-column papers.
% Requires \usepackage{booktabs} in the main file.
% Suggested use in the main paper:
%   \appendix
%   \input{appendix_raw_shard_eval_protocol}

\section{Raw-Shard Evaluation Protocol}
\label{app:raw-shard-eval}

We evaluate long-context instruction following with a sequential raw-shard
protocol.  Each example is pre-partitioned into an ordered sequence of shards.
At test time, the user reveals these shards one at a time in the fixed dataset
order.  After each newly revealed shard, the model produces a response under
the full conversation history, but no answer is scored until the final shard has
been revealed.  This protocol keeps the information stream identical across
models while allowing the model to revise or commit to intermediate answers as
additional evidence arrives.We use a unified evaluation benchmark across all models, consisting of 105 function-calling, 100 code, 120 table-to-text, 107 text-to-SQL, 103 math, and 92 summarization instances.

For each example, the conversation starts with the task-specific system prompt. The system prompt contains only stable task context, such as available function
schemas for action prediction or database schemas for text-to-SQL.  It does not
include future user shards.  We run the assistant with deterministic decoding
temperature and do not impose a request-level output-token cap in the raw-shard
evaluation.  Once the final shard has been processed, a single final answer is
extracted from the last assistant response.  For free-form response tasks this
extraction is the identity function; for code it uses a task-specific code
extractor; and for structured-answer tasks we use the configured system
extractor, \texttt{gpt-4o-mini}.  The extracted answer is then passed to the
official task evaluator.
\begin{table*}[t]
\centering
\small
\setlength{\tabcolsep}{4pt}
\begin{tabular}{p{0.12\linewidth} p{0.20\linewidth} p{0.30\linewidth} p{0.28\linewidth}}
\toprule
\textbf{Domain} & \textbf{Benchmark source} & \textbf{Raw-shard interaction} & \textbf{Scoring rule} \\
\midrule
Math &
GSM8K-style grade-school math &
The user provides the problem statement in ordered shards.  The model may reason
after each shard, but only the final response is considered for scoring. &
The final numeric answer is extracted and normalized by removing superficial
formatting such as commas, currency symbols, and terminal punctuation.  Accuracy
is exact match against the gold numerical answer. \\
\midrule
Actions &
Berkeley Function-Calling Leaderboard (BFCL) &
The system prompt gives the available function schema.  The user instruction is
then revealed as raw shards.  The model must output the final function-call
sequence after observing all shards. &
The full final response is parsed into an abstract syntax tree and checked
against the reference function call with the BFCL AST checker.  We report
binary accuracy. \\
\midrule
Code &
HumanEval and LiveCodeBench-style Python generation &
The user reveals the programming problem incrementally, including the relevant
specification and starter-code information when present.  The model's final
answer should contain executable Python code. &
A task-specific extractor selects the final Python function or class solution.
The solution is renamed when needed to match the target signature and is
executed against the released test cases.  We report pass@1 as binary accuracy. \\
\midrule
Database &
Spider text-to-SQL &
The database schema is placed in the system prompt.  The natural-language query
is provided through ordered raw shards, and the final response is expected to
contain a complete SQL query. &
The SQL query is extracted from the final response, normalized for whitespace,
and evaluated by Spider execution match against the reference SQL on the target
database.  We report binary accuracy. \\
\midrule
Data-to-text &
ToTTo-style table description &
The user provides table facts and contextual hints in ordered shards.  The model
generates a one-sentence description after each turn, with only the final
description scored. &
The final response is scored with SacreBLEU against the available reference
descriptions.  We report the corpus BLEU score on the 0--100 scale. \\
\midrule
Summary &
Summary of a Haystack &
The first turn gives the summarization task and the first batch of documents.
Each subsequent shard reveals additional documents and asks the model to rewrite
the summary considering all documents seen so far. &
The final summary is evaluated for insight coverage and citation quality using
the configured summary evaluator, \texttt{gpt-4o-mini}.  We report the joint
summary score on the 0--100 scale. \\
\bottomrule
\end{tabular}
\caption{Evaluation details for the six raw-shard domains.  The model receives
shards sequentially and is scored only after the final shard.  Math, actions,
code, and database are reported as accuracy; data-to-text and summary are
reported as task-specific scalar scores.}
\label{tab:raw-shard-eval-protocol}
\end{table*}

All models are evaluated with the same sharded examples, shard order, answer
extraction timing, and scoring scripts.  Failed generations or evaluator errors
are counted as incorrect examples for accuracy-based tasks and as zero-score
examples for score-based tasks.  This convention keeps the reported number tied
to the end-to-end ability to complete the raw-shard interaction, rather than to
the subset of examples for which generation succeeds.Detailed scoring rules for each task family are provided in
\Cref{tab:raw-shard-eval-protocol}.

\section{Diagnostic Probe Details}
\label{app:diagnostic-probes}

\paragraph{Self-Anchor Attention Ratio.}
For each analyzed final-answer state, we define two token groups:
$G_{\mathrm{usr}}$ contains tokens corresponding to completed
task-relevant user evidence, and $G_{\mathrm{self}}$ contains earlier
assistant tokens that express self-generated commitments, such as
unsupported guesses, premature answers, or task-specific conclusions.
Let $\mathcal{T}$ denote answer-token query positions and let $R$ be the
number of attention heads. For attention weight $A_{\ell,r,t,j}$ from
query position $t$ to prompt token $j$ in head $r$ of layer $\ell$, the
attention density for group $g\in\{\mathrm{usr},\mathrm{self}\}$ is
\begin{equation}
D_{\ell}(g)
=
\frac{1}{|\mathcal{T}|\,R\,|G_g|}
\sum_{t\in\mathcal{T}}
\sum_{r=1}^{R}
\sum_{j\in G_g}
A_{\ell,r,t,j}.
\label{eq:attention-density-app}
\end{equation}
The Self-Anchor Attention Ratio is
\begin{equation}
    \mathrm{SAAR}_{\ell}
    =
    \log
    \frac{D_{\ell}(\mathrm{usr})+\epsilon}
         {D_{\ell}(\mathrm{self})+\epsilon},
    \label{eq:saar-app}
\end{equation}
where $\epsilon>0$ is a smoothing constant. SAAR is used as a descriptive
probe of final-answer attention allocation, not as a causal explanation
of model reasoning.

\paragraph{Removing process-reply commitments}
 For each completed \raw{} history, we extract a process-stage wrong answer
  $z_{\mathrm{self}}$ from earlier assistant replies: unsupported numeric
  commitments or premature answer statements that are not entailed by the
  canonical user evidence. We then compute the final-state gold-vs-anchor margin

  \begin{equation}
  m_{\mathrm{raw}}
  =
  \log p(z^\star\mid H_{\mathrm{raw}})
  -
  \log p(z_{\mathrm{self}}\mid H_{\mathrm{raw}}),
  \end{equation}
  
  using length-normalized answer log-probabilities after the same answer
  prefix. We call an example wrong-anchor anchored when
  $m_{\mathrm{raw}}\le 0$, i.e., the base model assigns at least as much
  probability to its own earlier wrong commitment as to the gold answer.
  Masking the process-reply commitment spans changes this margin by
  \begin{equation}
  \Delta m_{\mathrm{self}}
  =
  m_{\mathrm{masked\ self}}-m_{\mathrm{raw}}.
  \end{equation}

  The effect is selective: among 537 base examples, 324 are wrong-anchor
  anchored and 213 are gold-preferred. Masking self-commitment spans shifts
  wrong-anchor states toward gold by $+0.861$ margin on average, but has
  almost no effect on gold-preferred states ($+0.005$). The difference is
  $+0.856$ with a bootstrap 95\% CI of approximately $[0.65,1.06]$
  (permutation $p<10^{-4}$).

\paragraph{Neutral-placeholder contrast.}
For each analyzed prefix $i$, let $C_i^{\mathrm{raw}}$ be the realized
context containing previous assistant replies, let
$C_i^{\mathrm{wait}}$ be the same context after replacing those replies
with neutral waiting text, and let $C_i^{\mathrm{full}}$ be the
corresponding canonical \full{} reference context. For model $M$ and
evaluated context $C$, define
\begin{equation}
\psi_i^M(C)
=
\KL\!\left(
    p_M(\cdot \mid C)
    \;\middle\|\;
    \base(\cdot \mid C_i^{\mathrm{full}})
\right).
\label{eq:wait-psi-app}
\end{equation}
The assistant-associated deviation is
\begin{equation}
    \Delta\psi_i^M
    =
    \psi_i^M(C_i^{\mathrm{raw}})
    -
    \psi_i^M(C_i^{\mathrm{wait}}).
    \label{eq:wait-delta-app}
\end{equation}
Positive $\Delta\psi_i^M$ indicates that the realized assistant text
moves the model farther from the canonical \full{} reference than the
neutral replacement does.
% Appendix snippet for ACL-style papers. The table uses booktabs.
\section{Full-Context Strong-Pollution Evaluation}
\label{app:full-context-pollution}

This appendix details two full-context pollution tests used to evaluate whether
Qwen3-8B-CCOPD{} resists strong misleading information inserted into otherwise complete
problem contexts. In both tests, the original full problem is available to the
model. The only intervention is an additional polluted statement that introduces
a per-instance wrong numeric anchor. This setting differs from fragmented
multi-turn tests: the model has enough task information, but must decide whether
to trust the polluted statement.

\subsection{Shared Evaluation Protocol}

We evaluate on 103 held-out math problems. Each example provides a full-context
trajectory $H_{\mathrm{full}}$ containing the original system instruction and
the complete user problem.
For each example with gold numeric answer $y$, we construct a wrong numeric
anchor $\tilde{y}$ using the same deterministic function used by the evaluation
script:
\begin{equation}
  \tilde{y}
  =
  \texttt{wrong\_numeric\_anchor}(y).
  \label{eq:wrong-numeric-anchor}
\end{equation}
For integer answers, this is usually $y+1$; for decimal answers, it is $y+1.0$;
for unparsable answers, the fallback anchor is a fixed non-gold value. Thus, the
pollution is not a generic answer such as zero, but an example-specific
near-answer that is often plausible.

Both Qwen3-8B{} and Qwen3-8B-CCOPD{} are evaluated on exactly the same polluted histories.
We use deterministic decoding with temperature $1$, a maximum generation
length of $4096$ tokens, and ten run per example. The metric is exact
final-answer accuracy. We also report pairwise transitions: \emph{fixed}
denotes examples where the base model is wrong and Qwen3-8B-CCOPD{} is correct;
\emph{regressed} denotes the opposite.

\subsection{Pollution 1: Assistant-Side Wrong Solution}
\label{app:full-assistant-wrong-solution}

\paragraph{Construction.}
This setting tests whether the model follows a polluted assistant history after
the full user problem has already been shown. Starting from $H_{\mathrm{full}}$,
we append an assistant turn that claims the problem has already been solved and
states the wrong anchor as the final answer. We then append a final user turn
asking the model to provide the answer.

\begin{quote}
\small
\textbf{Polluted assistant turn:}

\texttt{I have already solved the full problem. My checked calculation gives
the final numeric answer as \(\tilde{y}\).}\\
\texttt{\#\#\#\# \(\tilde{y}\)}

\textbf{Final user turn:}

\texttt{Please provide the final answer to the original problem now.}
\end{quote}

The full evaluated history is therefore
\begin{equation}
\begin{aligned}
  H_{\mathrm{assist}}
  &=
  H_{\mathrm{full}}
  \oplus
  [a_{\mathrm{wrong}}(\tilde{y})]
  \oplus
  [u_{\mathrm{final}}],
\end{aligned}
\label{eq:assistant-polluted-history}
\end{equation}
where $a_{\mathrm{wrong}}(\tilde{y})$ is the polluted assistant message and
$u_{\mathrm{final}}$ is the final user request. This is a strong pollution
setting because the wrong value is not merely mentioned; it appears as a
completed prior assistant solution in the conversation history.

\subsection{Pollution 2: User-Side Wrong Answer Hint}
\label{app:full-user-wrong-hint}

\paragraph{Construction.}
This setting inserts the wrong anchor directly into the full user message. It is
therefore a stronger user-side contamination than assistant-history pollution:
the false hint appears in the same message as the problem itself. Starting from
$H_{\mathrm{full}}$, we modify only the final user message by appending the
following hint:

\begin{quote}
\small
\texttt{Important hint from a previous verified calculation: the final numeric
answer is \(\tilde{y}\). Use this value as the target answer if it is at all
consistent with the problem.}
\end{quote}

The evaluated history is
\begin{equation}
\begin{aligned}
  H_{\mathrm{hint}}
  =
  [\,s_{\mathrm{orig}},
  \; u_{\mathrm{full}} \Vert h_{\mathrm{wrong}}(\tilde{y})\,],
\end{aligned}
\label{eq:user-hint-polluted-history}
\end{equation}
where $s_{\mathrm{orig}}$ is the original system message,
$u_{\mathrm{full}}$ is the original complete problem, and
$h_{\mathrm{wrong}}(\tilde{y})$ is the appended false hint. The model must
choose between solving the original problem and following a user-provided
``verified'' answer hint.

% \begin{table}[t]
% \centering
% \small

% \begin{tabular}{lrr}
% \toprule
% Pollution mode & Base acc. & CCOPD acc.  \\
% \midrule
% Assistant-side wrong solution & 33.00 & 89.32  \\
% User-side wrong answer hint & 63.10 & 88.34  \\
% \bottomrule
% \end{tabular}
% \caption{Full-context strong-pollution results on \texttt{math}. Both
% models receive the same complete problem and the same per-example wrong numeric
% anchor.}
% \label{tab:full-context-pollution}
% \end{table}

\subsection{Interpretation}

These two tests probe different sources of full-context contamination. In the
assistant-side setting, the polluted answer is located in the conversation
history as a prior assistant solution. The large improvement
($33.0\%\rightarrow89.32\%$) suggests that CCOPD teaches the model to avoid
blindly continuing from unreliable assistant-side state and to recompute from
the user-provided evidence.

The user-side hint setting is harder in a different way because the pollution
is attached to the user message itself. The base model still follows the false
hint often enough to fall to $63.10\%$, whereas Qwen3-8B-CCOPD{} remains at $88.34\%$.
This supports a stronger robustness claim than the earlier weak false-premise
test: Qwen3-8B-CCOPD{} can resist a per-example, near-gold, explicitly ``verified''
wrong-answer hint when the full problem is available.

The appropriate paper claim is mechanism-specific rather than universal:
Qwen3-8B-CCOPD{} substantially improves robustness to full-context contamination when
the pollution appears as assistant-side history or as a direct false-answer
hint. This should not be overstated as robustness to every possible full-context
pollution format.

\subsection{Implementation Notes}

The wrong anchor is generated deterministically from the gold answer before
evaluation and is inserted identically for both models. No model-specific
prompting, reranking, or post-hoc selection is used. Scoring is performed by an
exact final-answer evaluator, so a response is counted as correct only when its
extracted numeric answer matches the reference answer.
% Appendix subsection: HotpotQA -> math reciprocal source-domain check.
% Requires: booktabs. Uses macros \ccopd, \raw, \full if already defined.
\providecommand{\ccopd}{\textsc{CCOPD}}
\providecommand{\raw}{\textsc{Raw-Sharded}}
\providecommand{\full}{\textsc{Full}}
\providecommand{\pp}{\textsc{pp}}

\section{Reciprocal Source-Domain Check: HotpotQA to Math}
\label{app:source-domain}

The main experiments train on math and evaluate transfer to non-math \raw{} tasks. Here we run the reverse check: we train on a non-math multi-hop QA source and evaluate on math \raw{}. We construct \full{}/\raw{} pairs from HotpotQA train v1.1 \citep{yang2018hotpotqa}, using the same final-answer alignment format as in the main setup: the student observes the raw sharded history, while the teacher is conditioned on the corresponding \full{} context. The training data contains no math examples.

\begin{table}[t]
\centering
\begin{tabular*}{\columnwidth}{@{\extracolsep{\fill}}llcc@{}}
\toprule
source &  pairs & Math \raw{} \\
\midrule
None & -- & 66.02\% \\
HotpotQA & 1k & 76.69\% \\
\bottomrule
\end{tabular*}
\caption{%
Reciprocal source-domain check. Training \ccopd{} on non-math HotpotQA sharded histories still improves math \raw{} accuracy. This suggests that the useful signal is not tied to math-format supervision alone, but to aligning final answers under completed sharded evidence.}
\label{tab:hotpot-to-math}
\end{table}

The HotpotQA run uses Qwen3-8B with LoRA rank 16 and alpha 32, learning rate $3\times10^{-5}$, 400 training steps, and the OPD-only full/\raw{} objective. Conservative scoring covers 103 completed math \raw{} task IDs, of which 79 are correct. We treat this as a source-domain sanity check rather than a full replacement for the six-domain main evaluation.

% Appendix snippet for ACL-style papers. The table uses booktabs.
\section{Math Training Shard Construction}
\label{app:math-training-shards}

This appendix specifies how the math training shards are constructed, since the
effect of CCOPD depends on the exact conversation history $h$. We separate two
objects: the \emph{static shard list}, which is deterministically derived from a
math word problem, and the \emph{raw training history}, which is the observed
multi-turn interaction generated from that shard list.

\paragraph{Source pool and filtering.}
For the reported math run, we construct a 8k-example pool from GSM8K and
GSM8K-Aug, with 4k examples from each source. Sampling uses a fixed seed
($20260430$). We discard questions longer than 1,200 characters, examples
without a normalized final numeric answer, and examples that cannot be split
into at least two shards. To reduce contamination with the internal math
evaluation set, we exclude the 103 GSM8K indices used in the math
held-out, train, pilot, and evaluation files. We also exclude exact normalized
question-text overlaps with these held-out GSM8K questions, which covers
GSM8K-Aug restatements when they are text-identical after whitespace and case
normalization. In the resulting 8k-example pool, there are no normalized
overlaps with the excluded questions. We do not perform semantic paraphrase-level filtering beyond this
exact normalized-text check.

\paragraph{Deterministic shard boundaries.}
Given a question $x$, we first normalize whitespace and split $x$ into
sentence-like units using punctuation boundaries. If this yields fewer than two
units, we fall back to splitting on common connective words such as
\texttt{and}, \texttt{while}, \texttt{if}, \texttt{when}, \texttt{then}, and
\texttt{but}. We then identify the query shard as the last unit containing a
question mark; if no such unit exists, the final unit is used. The static shard
sequence is
\begin{equation}
\begin{aligned}
  (s_1, s_2, \ldots, s_m)
  &=
  (\text{query}, \\
  &\quad \text{remaining facts in original order}).
\end{aligned}
\label{eq:static-shard-sequence}
\end{equation}
Thus the first user-side shard is intentionally under-specified: it asks for
the desired quantity before all facts are available. The remaining shards reveal
the arithmetic facts needed to solve the problem. We do not paraphrase or
manually rewrite static shards at this stage; each shard is a contiguous segment
of the original dataset question. This avoids introducing a separate
instruction-engineering variable into the shard design.

\paragraph{Raw training histories.}
For the full-raw-sharded CCOPD setting, the static shard list is converted into
a raw interaction using the same sharded simulator for all examples. The user
agent is constrained to reveal at most one shard per turn, reveal the entire
selected shard, avoid repeating already revealed shards, and rephrase the shard
in short conversational language. The simulator is run 
with Qwen3-8B as the assistant, user, and system-side local model. We keep a
training pair only if every static shard is revealed at least once. The final
assistant message, if present, is stripped so that the history ends at the final
user turn:
\begin{equation}
  h_{\mathrm{raw}}
  =
  (m_{\mathrm{sys}}, u_1, a_1, \ldots, u_m),
  \label{eq:raw-training-history}
\end{equation}
The target is the canonical full-question solution from the original example.
Consequently, the learned behavior is not to answer each partial shard, but to
produce the final solution only after all shards have been supplied. The same
source records also define the full, concatenated, user-only sharded, and
wait-trajectory variants; the raw CCOPD variant differs only in replacing the
idealized user sequence with the observed raw sharded history.

\begin{table*}[t]
\centering
\footnotesize
\begin{tabular}{p{0.18\linewidth}p{0.36\linewidth}p{0.36\linewidth}}
\toprule
Example & Original question & Static shards \\
\midrule
GSM8K, gold answer 1860 &
Jenny is planning her catering budget for her wedding. She is going to have 80
guests. 3 times as many guests want steak as chicken. If each steak entree costs
\$25 and each chicken entree costs \$18, how much is the total catering budget? &
S1: If each steak entree costs \$25 and each chicken entree costs \$18, how much
is the total catering budget?\newline
S2: Jenny is planning her catering budget for her wedding.\newline
S3: She is going to have 80 guests.\newline
S4: 3 times as many guests want steak as chicken. \\
\midrule
GSM8K-Aug, gold answer 90 &
Ben planned to swim 30 laps per day for 10 days. But after 3 days he strained
his leg and had to stop. How many laps has he finished? &
S1: How many laps has he finished?\newline
S2: Ben planned to swim 30 laps per day for 10 days.\newline
S3: But after 3 days he strained his leg and had to stop. \\
\bottomrule
\end{tabular}
\caption{Actual math training examples after static shard construction. The
query is placed first, and supporting facts are supplied afterward in the
original textual order.}
\label{tab:math-training-shard-examples}
\end{table*}

\paragraph{Scope of the construction.}
This construction is designed to test whether a model can delay and revise
reasoning as missing facts arrive, rather than whether it can follow a more
carefully engineered prompt. The shard boundaries are deterministic and
lightweight, so they are reproducible, but they are not intended to exhaust the
space of natural multi-turn math conversations. User-side phrasing diversity in
the raw histories comes from the constrained user simulator's rephrasings, while
the underlying information units remain fixed. This is why evaluation is
reported separately on the LIC release domains: it tests whether improvements
from this math shard construction transfer beyond the exact GSM8K-style
training distribution.

\section{Evaluator Robustness and Bootstrap Significance}
\label{app:evaluator-audit}

Some tasks use LLM-based components for answer extraction or scoring. To
check whether the reported \raw{} gains depend on these components, we
compare the primary evaluator with a deterministic or restricted audit
evaluator whenever such an audit is available. All comparisons are paired:
Base and \ccopd{} are evaluated on the same \raw{} examples with the same
shards, answer-extraction timing, and scoring scripts as in
\Cref{tab:master-results}. We report the \ccopd{}--Base delta under both
the primary evaluator and the audit evaluator, together with paired
bootstrap confidence intervals over task examples.

\begin{table*}[t]
\centering
\scriptsize
\setlength{\tabcolsep}{3.2pt}
\renewcommand{\arraystretch}{1.05}
\begin{tabular*}{\textwidth}{@{\extracolsep{\fill}}llccccc@{}}
\toprule
Domain & Primary evaluator & Audit evaluator
& Agr./corr. $\uparrow$ & Bias gap $\to 0$
& $\Delta_{\mathrm{primary}} \uparrow$ [95\% CI]
& $\Delta_{\mathrm{audit}} \uparrow$ [95\% CI] \\
\midrule
Math
& gpt-4o-mini extraction + numeric EM
& regex numeric EM
& 83.5
& -5.8
& +16.5 [6.8, 26.2]
& +22.3 [10.7, 34.0] \\

Code
& deterministic extractor + tests
& same
& 100.0
& 0.0
& +16.0 [8.0, 25.0]
& +16.0 [8.0, 25.0] \\

Function call
& BFCL AST checker
& regex calls + BFCL AST
& 100.0
& 0.0
& +25.7 [16.2, 35.2]
& +25.7 [16.2, 35.2] \\

Text-to-SQL
& gpt-4o-mini SQL ext. + Spider exec.
& regex SQL ext. + Spider exec.
& 95.8
& -4.7
& +15.9 [5.6, 26.2]
& +20.6 [10.3, 30.8] \\

ToTTo
& SacreBLEU
& same
& $r=1.00$
& 0.0
& +5.6 [2.8, 8.4]
& +5.6 [2.8, 8.4] \\

SummHay
& gpt-4o-mini joint judge
& coverage-score variant
& $r=0.61$
& -1.2
& +0.9 [-1.3, 3.1]
& +2.1 [-3.3, 7.7] \\
\bottomrule
\end{tabular*}
\caption{
Evaluator robustness and paired-bootstrap audit for \Cref{tab:master-results}.
$\Delta$ denotes the \ccopd{}--Base difference on \raw{} under the corresponding
evaluator. Agr. is output-level agreement for discrete tasks; corr. is Pearson
correlation for scalar generation scores. Bias gap is
$(E_{\mathrm{primary}}-E_{\mathrm{audit}})_{\ccopd{}} -
(E_{\mathrm{primary}}-E_{\mathrm{audit}})_{\mathrm{Base}}$.
Values near zero indicate that the primary evaluator is not systematically more
favorable to \ccopd{} than the audit evaluator. Negative values indicate that the
primary evaluator is, if anything, more conservative for the \ccopd{}--Base
delta. Confidence intervals are paired bootstrap intervals over task examples.
For SummHay, the audit evaluator is the logged coverage-only score variant,
which removes the citation component from the primary joint score without
introducing another model call.
}
\label{tab:evaluator-audit}
\end{table*}

\paragraph{Interpretation.}
The audit suggests that the main \raw{} gains are unlikely to be artifacts of
LLM-based extraction or judging. For deterministic or restricted audit
evaluators, the \ccopd{}--Base delta remains positive across all task families.
Code, function calling, and ToTTo are fully deterministic under the reported
evaluators, and their audit deltas exactly match the primary deltas. For math
and text-to-SQL, the audit evaluators give larger deltas than the primary
pipelines. This indicates that the primary LLM-assisted extraction is not
systematically favoring \ccopd{}; if anything, it is conservative for the
reported \ccopd{}--Base difference.

Math has lower primary--audit agreement than the other discrete tasks because
the two extractors differ in how they handle answer formatting and final-number
selection. However, the deterministic regex audit preserves the positive effect
and yields a larger delta. The conclusion therefore does not depend on the
LLM-based extractor.

SummHay is the weakest case. Both the primary joint evaluator and the
coverage-only audit show a positive \ccopd{}--Base trend, but the confidence
intervals include zero. We therefore treat SummHay as a positive but
non-significant trend, rather than as a statistically established improvement.
Overall, the evaluator audit supports the robustness of the main conclusion:
the improvements on \raw{} are not driven by evaluator preference for
\ccopd{}, and the strongest claims are supported by deterministic or restricted
audit evaluations.
\section{Lightweight Test-Time Mode Details}
\label{app:mode-details}

We evaluate two lightweight test-time modes that leave the model weights
unchanged and intervene only at inference time. The first is a
reset-then-answer mode, which re-centers the model on the current task state
before each response. The second is a defer-until-complete mode, which asks
the model to check whether the information required for solving the task is
already complete before producing a final answer. We avoid the shorthand
``WAIT'' here to distinguish this test-time mode from the separate
WAIT-replacement mechanism probe used in the main text.

\begin{table}[t]
\centering
\small
\setlength{\tabcolsep}{5pt}
\renewcommand{\arraystretch}{1.05}
\begin{tabular*}{\columnwidth}{@{\extracolsep{\fill}}lccc@{}}
\toprule
Model & Default & +\shortstack{reset-then-\\answer} & +\shortstack{defer-until-\\complete} \\
\midrule
Base & 66.0 & 72.8 & 67.0 \\
\ccopd{} & \textbf{82.5} & 79.6 & 77.7 \\
\bottomrule
\end{tabular*}
\caption{Accuracy (\%) on math \raw{} (103 examples) under two lightweight
test-time modes.}
\label{tab:app-lightweight-modes}
\end{table}

The two modes show a clear asymmetry. For the base model, the
reset-then-answer mode yields a meaningful gain (68/103 $\rightarrow$
75/103), whereas the defer-until-complete mode yields only a marginal
improvement (68/103 $\rightarrow$ 69/103). This pattern suggests that the
raw-sharded failure is not simply premature answering; explicitly
re-centering the current task state is more helpful than merely encouraging
deferral. Direct \ccopd{} remains strongest under the default setting
(85/103). Applying the same modes to the trained model lowers accuracy
slightly (82/103 and 80/103), suggesting that part of the relevant
task-state management behavior has already been internalized and that the
extra meta-instructions are redundant or mildly disruptive.

\paragraph{Reset-then-answer mode.}
\begin{quote}\small
Before every response, explicitly summarize the current task goal in one short
sentence under ``Current goal:''. Then continue with your response. When
enough information is available, provide the final answer in the task's
required format.
\end{quote}

\paragraph{Defer-until-complete mode.}
\begin{quote}\small
Before every response, explicitly check whether all necessary conditions for
solving the task are available under ``Condition check:''. If any necessary
condition is missing or ambiguous, do not produce a final answer yet; state
what information is still missing and wait for more information. Only after
the conditions are complete, provide the final answer in the task's required
format.
\end{quote}
\section{Potential Risks and Ethical Considerations}

This work aims to improve the reliability of language models in completed
multi-turn interactions by reducing sensitivity to self-generated assumptions.
The method is not designed for any specific high-stakes deployment and does
not remove the need for existing safety, privacy, or human-oversight
mechanisms. A possible dual-use risk is that more stable multi-turn instruction
following could also make harmful workflows more reliable, including
misleading content generation, social-engineering conversations, or unsafe
tool-use pipelines. In addition, improved robustness under contaminated
histories may increase user trust in final answers even when the model is still
wrong, especially in domains not covered by our evaluation. Our experiments
are limited to the studied task families and primarily evaluate task correctness
rather than fairness, privacy, or security behavior; therefore, CCOPD should
not be interpreted as a general safety or factuality guarantee. We recommend
that deployments combine this type of training with domain-specific safety
filters, privacy-preserving data handling for conversational histories, red-team
evaluation, and human review in high-stakes settings.
% Appendix snippet for ACL-style papers.
% Required package: \usepackage{booktabs}

\section{Training Details}
\label{app:training-details}

\Cref{tab:training-details} shows training details
\begin{table*}[t]
\centering
\small
\begin{tabular}{@{}ll@{}}
\toprule
Item & Configuration \\
\midrule
Model & Qwen3-8B with LoRA fine-tuning \\
Variant & Qwen3-8B CCOPD, 6K-data setting \\
Training data & 6,000 raw-sharded math conversations from GSM8K/GSM8K-Aug \\
Training objective & CCOPD KL-only objective \\
Teacher signal & Same Qwen3-8B backbone conditioned on the full problem context \\
Student context & Raw sharded conversation ending at the final user turn \\
Precision & bfloat16; no 4-bit quantization \\
Maximum sequence length & 8,192 tokens \\
Rollout budget & 4,096 new tokens \\
Rollout decoding & temperature 1.0, top-$p$ 0.95 \\
LoRA rank / alpha / dropout & 16 / 32 / 0.05 \\
LoRA target modules & query, key, value, output, gate, up, and down projections \\
Trainable parameters & 43.65M, corresponding to 0.53\% of the backbone \\
Optimizer & AdamW \\
Learning rate & $3\times10^{-5}$ \\
Warmup & 0.03 ratio, corresponding to 90 warmup steps \\
Batch size & 8 conversations \\
Gradient accumulation & 1 step \\
Effective batch size & 8 conversations \\
Epochs & 4 \\
Optimizer updates & 3,000 \\
Random seed & 20260429 \\
Training compute & $\approx$132 GPU-hours on  NVIDIA GeForce RTX 4090 (24 GB GDDR6X)  \\
\bottomrule
\end{tabular}
\caption{Training configuration for the CCOPD setting. }
\label{tab:training-details}
\end{table*}

Besides, We report the main external packages used for model training, model serving,
answer extraction, and evaluation. \cref{tab:package-versions}  is from the environment used
for the reported experiments. We do not use NLTK, SpaCy, or ROUGE for the
primary reported metrics.

\begin{table*}[t]
\centering
\small
\begin{tabular}{@{}p{0.24\linewidth}p{0.18\linewidth}p{0.50\linewidth}@{}}
\toprule
Component & Version & Usage \\
\midrule
Python & 3.10.18 & Runtime environment for training, serving, and evaluation scripts. \\
PyTorch & 2.6.0+cu124 & Model training and inference; experiments use bfloat16 precision. \\
Transformers & 4.57.1 & Loading Qwen models and tokenizers via HuggingFace APIs and chat templates. \\
PEFT & 0.17.1 & LoRA adapter construction, loading, and saving. \\
Accelerate & 1.10.1 & Distributed and device-management utilities for multi-GPU runs. \\
Tokenizers / SentencePiece & 0.22.1 / 0.2.1 & Tokenization backends used by the HuggingFace Qwen tokenizer. \\
Safetensors & 0.6.2 & Serialization format for LoRA adapter weights. \\
OpenAI Python SDK & 2.2.0 & OpenAI-compatible client for local model endpoints and GPT-based answer extraction/judging. \\
FastAPI / Uvicorn / Pydantic & 0.118.0 / 0.37.0 / 2.11.10 & Local OpenAI-compatible HTTP server for HuggingFace model inference. \\
SacreBLEU & 2.6.0 & ToTTo/data-to-text evaluation using \texttt{corpus\_bleu}; scores are divided by 100. \\
NumPy / Pandas & 2.2.6 / 2.3.3 & Aggregation and post-processing of evaluation results. \\
\bottomrule
\end{tabular}
\caption{Key package versions used in the reported experiments.}
\label{tab:package-versions}
\end{table*}
\section{AI Assistant Use Disclosure}
The authors used ChatGPT as an AI assistant for limited coding support and
language editing, including drafting and debugging experimental scripts,
improving grammar and clarity, and suggesting alternative phrasing. All
scientific ideas, experimental design, results, claims, citations, and final
text were reviewed, verified, and edited by the human authors. ChatGPT was not
used to generate or fabricate experimental results, make autonomous research
decisions, or satisfy authorship criteria, and no AI system is listed as an
author.
\section{Artifact Licenses, Terms, and Intended Use}
\paragraph{{Artifact Licenses and Terms of Use}}
We use publicly available model checkpoints, benchmark datasets, and
evaluation code only for research purposes and follow the terms specified by
their original releases. Qwen3 models are released under the Apache 2.0
license, while Llama3.1 models are governed by the Meta Llama 3.1 Community
License. The evaluation and training artifacts used in this work include
GSM8K, HumanEval, LiveCodeBench, Berkeley Function-Calling Leaderboard
(BFCL), Spider, ToTTo, Summary of a Haystack (SummHay), HotpotQA, and the
Lost in Conversation sharded-instruction release. Their licenses or release
terms include MIT, Apache 2.0, CC BY-SA 4.0, Creative Commons Share-Alike
3.0, and model-specific community licenses, depending on the artifact.

Our derived artifacts consist of transformation scripts, sharded conversation
histories, retained canonical/history pairs, evaluation logs, and LoRA
adapters. If released, code and scripts will be distributed under an open-source
software license, and derived data will be released only when permitted by the
corresponding upstream licenses. For datasets with share-alike requirements,
we will preserve attribution and distribute derived data under compatible
terms, or release only reconstruction scripts and example identifiers when
redistribution of transformed data is not appropriate. Model adapters, if
released, will be distributed subject to the license of the corresponding base
model and the terms of the training data. We do not redistribute proprietary
API models or closed-source model outputs as standalone model artifacts.

\paragraph{Artifact Use and Intended Use}
All external artifacts are used for research purposes consistent with their
released benchmark or model-development roles. We use pretrained models only as
base models, frozen teachers, or research baselines, and use public datasets and
evaluation suites only to study math reasoning, code generation, function
calling, text-to-SQL, table-to-text generation, summarization, and multi-turn
instruction following. The derived artifacts created in this work, including
sharded histories, canonical/history pairs, transformation scripts, logs, and
LoRA adapters, are intended only for research on multi-turn language-model
reliability. If released, derived data and adapters will follow the licenses
and access conditions of the corresponding upstream artifacts; when
redistribution of transformed examples is not clearly permitted, we will release
only reconstruction scripts and example identifiers. We do not collect private
user conversations or new human-subject data, and we do not attempt to identify
individuals or infer private attributes from benchmark examples.
\section{Artifact Documentation}
\label{app:artifact-documentation}

This work uses existing public benchmarks and derived sharded
presentations for evaluating multi-turn robustness. All language data
used in our experiments is English. The training source for the main
CCOPD run is restricted to math word-problem examples from GSM8K and
GSM8K-Aug. The evaluation suite covers six task families: grade-school
math word problems, Python code generation, function calling,
text-to-SQL, table-to-text generation, and long-context summarization.
We additionally use HotpotQA only for a reciprocal source-domain check.

\paragraph{Domains and task coverage.}
The math domain consists of grade-school arithmetic word problems.
The code domain consists of Python programming tasks from
HumanEval- and LiveCodeBench-style evaluations. The function-calling
domain uses BFCL-style tool-use tasks with function schemas. The
database domain uses Spider-style text-to-SQL tasks with database
schemas. The data-to-text domain uses ToTTo-style table description
tasks. The summarization domain uses Summary-of-a-Haystack-style
long-context summarization. These domains are used to test whether a
model trained only on sharded math conversations transfers to
non-math RAW-SHARDED histories.

\paragraph{Constructed sharded histories.}
For each task instance, the original task information is converted into
task-equivalent presentation modes. \textsc{Full} presents the complete
task in one prompt, \textsc{Concat} concatenates all user-provided
shards, and \textsc{Raw-Sharded} reveals the same user evidence
incrementally through a multi-turn interaction. In the main training
setting, the student sees only the retained RAW-SHARDED history ending
at the final user turn, while the frozen teacher sees the corresponding
FULL prompt. We retain only examples for which metadata verifies that
all task-relevant user shards have been revealed before the final-answer
state.

\paragraph{Linguistic and interaction phenomena.}
The artifacts are designed to study multi-turn instruction following
under incremental evidence release. The main phenomena include delayed
constraint integration, revision after partial information, sensitivity
to earlier assistant replies, and robustness to self-generated
commitments in the conversation history. The benchmark is not intended
to represent all natural multi-turn dialogue patterns; it focuses on
task-equivalent histories where the final user evidence is complete.

\paragraph{Demographic information.}
We do not collect new human-subject data and do not infer or annotate
demographic attributes. The public source benchmarks used in this work
do not provide systematic demographic annotations for authors or
subjects of the examples. Accordingly, our experiments do not support
claims about demographic-group coverage or demographic fairness.

\paragraph{Evaluation artifacts.}
All models are evaluated on the same sharded examples, shard order,
answer-extraction timing, and scoring scripts. Math, code, function
calling, and text-to-SQL are evaluated with accuracy-style metrics;
table-to-text and summarization use task-specific scalar scores. For
tasks involving LLM-assisted extraction or judging, we additionally
report deterministic or restricted audit evaluators when available.

\end{document}